\newif\ifarXiv         
\newif\ifjournal        
  \journalfalse       \arXivtrue      

 \ifarXiv 
\documentclass[times,11pt]{article} 
\usepackage[small,compact]{titlesec}
\usepackage{amsmath,amssymb,amsfonts,mathabx,setspace}
\usepackage{graphicx,caption,epsfig,subcaption,epsfig,wrapfig,sidecap}
\usepackage{url,color,verbatim,algorithmicx}
\usepackage[sort,compress]{cite}
\usepackage{soul}
\usepackage{xspace}

\usepackage[noend]{algpseudocode}
\usepackage[ruled,boxed]{algorithm}

\newcounter{multifig} 

\newcommand{\figcaption}[1]
{\stepcounter{multifig}
\addcontentsline{lof}{figure}{\string\numberline {\arabic{multifig}}{\ignorespaces #1}} Figure \arabic{multifig}: #1}

\usepackage{multirow}

\usepackage[scaled]{helvet}
\usepackage[T1]{fontenc}
\usepackage[bookmarks=true, bookmarksnumbered=true, colorlinks=true,   pdfstartview=FitV,
linkcolor=blue, citecolor=blue, urlcolor=blue]{hyperref}
\usepackage{multirow, makecell, colortbl, hhline}
\usepackage{booktabs} 
\usepackage{enumitem} 
\usepackage{etoc}          
\usepackage[dvipsnames]{xcolor}

\marginparwidth 0pt
\oddsidemargin  -0.15in   
\evensidemargin  -0.15in 
\marginparsep 0pt
\topmargin   -.55in   
\textwidth   6.8in      
\textheight  9.3in      
\parskip=0.0in


\def\R{\mathbb{R}}

\def\E{\mathbb{E}}
\newcommand{\Ebracket}[1]{\mathbb{E}\left[{#1}\right]}

\def\mH{\mathcal{H}}
\def\errfcnl{\mathcal{E}}

\def\rhoT{\widebar \rho_T}

\def\ftrue{\ensuremath{f_*}\xspace}
\def\momf{\ensuremath{\xi}\xspace}
\def\ymin{y_{\min}}
\def\ymax{y_{\max}}

\def\L{\mathcal{L}}
\def\sumLavg{\frac{1}{L}\sum_{l=1}^L}
\def\intTavg{\frac{1}{T}\int_0^T}
\newcommand{\argmin}[1]{\underset{#1}{\operatorname{arg}\operatorname{min}}\;}

\newcommand{\supp}[1]{\text{supp}(#1)}

\newcommand{\innerp}[1]{\langle{#1}\rangle}
\newcommand{\floor}[1]{\lfloor{#1}\rfloor}

\definecolor{mygrey}{gray}{0.75}

\def\XXint#1#2#3{{\setbox0=\hbox{$#1{#2#3}{\int}$ }
\vcenter{\hbox{$#2#3$ }}\kern-.6\wd0}}


\newtheorem{theorem}{Theorem}

\newtheorem{assumption}[theorem]{Assumption}

\newtheorem{definition}[theorem]{Definition}
\newtheorem{myexample}[theorem]{Example}

\newtheorem{lemma}[theorem]{Lemma}

\newtheorem{remark}[theorem]{Remark}

\newenvironment{proof}[1][Proof]{\noindent\textbf{#1.} }{\ \rule{0.5em}{0.5em}}

\numberwithin{equation}{section}
\numberwithin{theorem}{section}

\title{Unsupervised learning of observation functions in \\  state-space models  by nonparametric moment methods}
\usepackage{authblk} 

\author[1]{Qingci An}
\author[2,5]{Yannis Kevrekidis}
\author[3]{Fei Lu}
\author[2,4]{Mauro Maggioni}
\affil[1,3,4]{\footnotesize  Department of Mathematics, Johns Hopkins University, Baltimore, MD 21218, USA \newline
\href{qan2@jhu.edu}{qan2@jhu.edu},
\href{feilu@math.jhu.edu} {feilu@math.jhu.edu}, 
\href{mmaggio4@jh.edu}{mmaggio4@jh.edu} 
}
\affil[2]{\footnotesize  Department of Applied Mathematics and Mathematics, Johns Hopkins University, Baltimore, MD 21218, USA
\href{yannisk@jhu.edu}{yannisk@jhu.edu}
}

\date{}
\fi

\usepackage{enumitem}
\usepackage{xcolor}
\usepackage{full page}

\begin{document}
\maketitle
\begin{abstract}
We investigate the unsupervised learning of non-invertible observation functions in nonlinear state-space models. 
Assuming abundant data of the observation process along with the distribution of the state process, we introduce a nonparametric generalized moment method to estimate the observation function via constrained regression. The major challenge comes from the non-invertibility of the observation function and the lack of data pairs between the state and observation. We address the fundamental issue of identifiability from quadratic loss functionals and show that the function space of identifiability is the closure of a RKHS that is intrinsic to the state process. 
Numerical results show that the first two moments and temporal correlations, along with upper and lower bounds, can identify functions ranging from piecewise polynomials to smooth functions, leading to convergent estimators. The limitations of this method, such as non-identifiability due to symmetry and stationarity, are also discussed. 
\end{abstract}

\ifjournal
\begin{tcbverbatimwrite}{tmp_\jobname_abstract.tex}
\begin{keywords} 
unsupervised learning, state-space models, nonparametric regression, generalized moment method, RKHS
\end{keywords}

\begin{MSCcodes}
62G05, 
68Q32,  
62M15  
\end{MSCcodes}
\end{tcbverbatimwrite}
\input{tmp_\jobname_abstract.tex}
\fi 

\ifarXiv
\noindent{\bf Key words}  unsupervised learning, state-space models, nonparametric regression, generalized moment method, RKHS
\vspace{3mm}
 \tableofcontents
\fi


\section{Introduction}
We consider the following state-space model for $(X_t, Y_t)$ processes in $\mathbb{R}\times \R$:
\begin{align}
& \text{State model:} & dX_t & = a(X_t)dt + b(X_t) dB_t,
& \text{ with } &a, b \text{are known};  & \label{eq:SM} \\
& \text{Observation model:} & Y_t & = \ftrue(X_t),             & \text{ with } & \ftrue   \text{ unknown}.  & \label{eq:OM} 
 \end{align} 
Here $(B_t)$ is the standard Brownian motion, the drift function $a(x)$ and the diffusion coefficient $b(x)$ are given, satisfying the linear growth and global Lipschitz conditions. We assume that the initial distribution of $X_{t_0}$ is given. Thus, the state model is known, in other words, the distribution of the process $(X_t)$ is known.

Our goal is to estimate the unknown observation function $\ftrue$ from data consisting of an ensemble of trajectories of the process $Y_t$, denoted by $\{Y_{t_0:t_L}^{(m)}\}_{m=1}^M$, where $m$ indexes trajectories, $t_0<\dots<t_L$ are the times at which the observations are made. 
In particular, there are no pairs  $(X_t,Y_t)$ being observed, so in the language of machine learning this may be considered an unsupervised learning problem. 
A case of particular interest in the present work is when the observation function $\ftrue$ is nonlinear and {\em non-invertible}, and it is within a large class of functions, including  smooth functions but also, for example, piecewise continuous functions.

We estimate the observation function $\ftrue$ by matching generalized moments, while constraining the estimator to a suitably chosen finite-dimensional hypothesis (function) space, whose dimension depends on the number of observations, in the spirit of nonparametric statistics. We consider both first- and second-order moments, as well as temporal correlations, of the observation process. The estimator minimizes the discrepancy between the moments over hypothesis spaces spanned by B-spline functions, with upper and lower pointwise constraints estimated from data. 
The method we propose has several significant strengths:
\begin{itemize}
\item the generalized moments do not require the invertibility of the observation function \ftrue; 
\item low-order generalized moments tend to be robust to additive observation noise; 
\item generalize moments avoid the need of local constructions, since they depend on the entire distribution of the latent and observed processes; 
\item our nonparametric approach does not require {\em a priori} information about the observation function, and, for example, it can deal with both continuous and discontinuous functions;  
\item the method is computationally efficient because the moments need to be estimated only once, and their computation is easily performed in parallel. 
\end{itemize}
We note that the method we propose readily extends to multivariate state models, with the main statistical and computational bottlenecks coming from the curse-of-dimensionality in the representation and estimation of a higher-dimensional $f_*$ in terms the basis functions.

The problem we are considering has been studied in the contexts of nonlinear system identification \cite{billings_NonlinearSystem2013,ljung1998system}, filtering and data assimilation \cite{cappe_InferenceHidden2005,LSZ15}, albeit typically when observations are in the form of one, or a small number of, long trajectories, and in the case of an invertible or smooth observations function \ftrue.
The estimation of the unknown observation function and of the latent dynamics from unlabeled data has been considered in \cite{hafner2019_LearningLatent,kaiser2020_ModelBasedReinforcement,gelada2019_DeepMDPLearning,moosmuller2019_GeometricApproach} and references therein.
Inference for state-space models (SSMs) has been widely studied; most classical approaches focus on estimating the parameters in the SSM from a single trajectory of the observation process, by expectation-maximization methods maximizing the likelihood, or Bayesian approaches \cite{billings_NonlinearSystem2013,ljung1998system,cappe_InferenceHidden2005,kantas_OverviewSequential2009,ghosh2014_BayesianInference}, with the recent studies estimating the coefficients in a kernel representation  \cite{tobar2015_UnsupervisedStateSpace}  or the coefficients of a pre-specified set of basis functions \cite{svensson2017_FlexibleState}.

Our framework combines nonparametric learning  \cite{Gyorfi06,cucker2007learning} with the generalized moments method, that is mainly studied in the setting of parametric inference \cite{sorensen_EstimatingFunctions2012,prakasarao_StatisticalInference1988,pokern_ParameterEstimation2009}. 
We study the identifiability of the observation function from first-order moments, and show that the first-order generalized moments can identify the function in the $L^2$ closure a reproducing kernel Hilbert space (RKHS) that is intrinsic to the state model. As far as we know, this is the first result on the function space of identifiability for nonparametric learning of observation functions in SSMs.

When the observation function is invertible, its unsupervised regression is investigated \cite{rahimi07unsupervised} by maximizing the likelihood for high-dimensional data. However, in many applications, particularly those involving complex dynamics, the observation functions are non-invertible, for example they are projections or nonlinear non-invertible transformations (e.g.,$f(x) = |x|^2$ with $x\in \R^d$).  As a consequence, the resulting observed process may have discontinuous or singular probability densities \cite{jeffrey2018_HiddenDynamics,guglielmi2015_ClassificationHidden}. In \cite{moosmuller2019_GeometricApproach}, it has been shown empirically that delayed coordinates with principal component analysis may be used to estimate the dimension of the hidden process, and diffusion maps \cite{DiffusionPNAS} may yield a diffeomorphic copy of the observation function.

The remainder of the paper is organized as follows. We present the nonparametric generalized moments method in Section \ref{sec2}. In Section \ref{sec3} we study the identifiability of the observation function from first-order moments, and show that the function spaces of identifiability are RKHSs intrinsic to the state model. We present numerical examples to demonstrate the effectiveness, and limitations, of the proposed method in Section \ref{sec4}. Section \ref{sec5} summarizes this study and discusses directions of future research; we review the basic elements about RKHSs in Appendix \ref{sec:appendixA}. 

\section{Non-parametric regression based on generalized moments}\label{sec2}

Throughout this work, we focus on discrete-time observations of the state-space model \eqref{eq:SM}--\eqref{eq:OM},  because data in practice are discrete in time, and the extension to continuous time trajectories is straightforward. We thereby suppose that the data is in the form $\{Y_{t_0:t_L}^{(m)} \}_{m=1}^M$, with $m$ indexing multiple independent trajectories, observed at the vector $t_0:t_L$ of discrete times $(t_0,\cdots,t_L)$.

\subsection{Generalized moments method}
We estimate the observation function $\ftrue$ by the generalized moment method (GMM) \cite{sorensen_EstimatingFunctions2012,prakasarao_StatisticalInference1988,pokern_ParameterEstimation2009},
searching for an observation function $\widehat f$, in a suitable finite-dimensional hypothesis (function) space, such that the moments of functionals of the process $(\widehat f(X_t))$ are close to the empirical ones (computed from data) of $\ftrue(X_t)$.

We consider ``generalized moments'' in the form $\Ebracket{\momf(Y_{t_0:t_L}) }$, where $\momf:\R^{L+1}\to \R^K$ is a functional of the trajectory $Y_{t_0:t_L}$. For example,  the functional $\momf$ can be $\momf(Y_{t_0:t_L}) = [Y_{t_0:t_L},  Y_{t_0}Y_{t_1},\ldots,Y_{t_{L-1}}Y_{t_{L}}]\in \R^{2L+1}$, in which case $\Ebracket{\momf(Y_{t_0:t_L}) }=  \left[ \Ebracket{Y_{t_0:t_L}}, \Ebracket{Y_{t_1}Y_{t_2}},\ldots,\Ebracket{Y_{t_{L-1}}Y_{t_{L}} }\right]$ is the vector of the first moments and of temporal correlations at consecutive observation times.  The empirical generalized moments $\xi$ are computed from data by Monte Carlo approximation:
 \begin{equation}\label{eq:exp_Y}
  \Ebracket{\momf(Y_{t_0:t_L}) } \approx E_{M}[{\momf(Y_{t_0:t_L}) }] := \frac{1}{M} \sum_{m=1}^M \momf(Y_{t_0:t_L}^{(m)}),
  \end{equation} 
which converges at the rate $M^{-1/2}$ by the Central Limit Theorem, since the $M$ trajectories are independent. Meanwhile, since the state model (hence the distribution of the state process) is known, for any putative observation function $f$, we approximate the moments of the process $(f(X_t)))$ by simulating $M'$ independent trajectories of the state process $(X_t)$: 
\begin{equation}\label{eq:exp_X}
 \Ebracket{\momf(f(X)_{t_0:t_L}) } \approx 
 \frac{1}{M'} \sum_{m=1}^{M'} \momf(f(X)_{t_0:t_L}^{(m)})\,.
 \end{equation} 
Here, with some abuse of notation, $f(X)_{t_0:t_L}^{(m)}:=(f(X_{t_0}^{(m)}),\dots,f(X_{t_L}^{(m)}))$.
The number $M'$ can be as large as we can afford from a computational perspective. In what follows, since $M'$ can be chosen large -- only subject to computational constraints -- we consider the error in this empirical approximation negligible and work with $\Ebracket{\momf(f(X)_{t_0:t_L}) }$ directly. 

We estimate the observation function $\ftrue$ by minimizing a notion of discrepancy between these two empirical generalized moments:
\begin{align}\label{eq:distGMM}
\widehat f = \argmin{f\in \mH} \errfcnl^{M}(f), \quad \text{where }\errfcnl^{M}(f):=  \mathrm{dist}\left(E_{M}[{\momf(Y_{t_0:t_L}) }], \Ebracket{\momf(f(X)_{t_0:t_L}) } \right)^2 ,
\end{align}
where $f$ is restricted to some suitable hypothesis space $\mH$, and $\mathrm{dist}(\cdot,\cdot)$ is a proper distance between the moments to be specified later. We choose $\mH$ to be a subset of an $n$-dimensional function space, spanned by basis functions $\{\phi_i\}$, within which we can write $\widehat f =\sum_{i=1}^n \widehat c_i\phi_i$.
By the law of large numbers, $\errfcnl^M(f)$ tends almost surely to $\errfcnl(f):=\mathrm{dist}\left(\Ebracket{\momf(Y_{t_0:t_L}) }, \Ebracket{\momf(f(X)_{t_0:t_L}) } \right)^2$. 

It is desirable to choose the generalized moment functional $\momf$ and the hypothesis space $\mH$ so that the minimization in \eqref{eq:distGMM} can be performed efficiently.
We select the functional $\momf$ so that the moments $\Ebracket{\momf(f(X)_{t_0:t_L}) }$, for  $f=\sum_{i=1}^n c_i\phi_i$, can be efficiently evaluated for all $(c_1,\ldots,c_n)$. To this end, we choose linear functionals or low-degree polynomials, so that we only need to compute the moments of the basis functions once, and use these moments repeatedly during the optimization process, as discussed in Section \ref{sec:lossFn}. 
The selection of the hypothesis space is detailed in Section \ref{sec:basisFn_dim}.

\subsection{Loss functional and estimator}\label{sec:lossFn}
The generalized moments we consider include the first and the second moments, as well as the one-step temporal correlation: we let $\momf(Y_{t_0:t_L}):=  \left( Y_{t_0:t_L}, Y_{t_0:t_L}^2, Y_{t_0}Y_{t_1},\ldots,Y_{t_{L-1}}Y_{t_{L}} \right)  \in \R^{3L+2}$. The loss functional in \eqref{eq:distGMM} is then chosen in the following form: for weights $w_1,\dots,w_3>0$, 
\begin{equation}\label{eq:errfnl_12corr}
\begin{aligned}
\errfcnl(f)  := &  w_1  \underbrace{\sumLavg \big|\E[f(X_{t_l})] - \E[Y_{t_l}]|^2}_{\errfcnl_1(f)} + w_2\underbrace{\sumLavg \left|\E[f(X_{t_l})^2] - \E[Y_{t_l}^2]\right|^2}_{\errfcnl_2(f)}  \\
& + w_3 \underbrace{\sumLavg \left|\E[f(X_{t_l})f(X_{t_{l-1}})] - \E[Y_{t_l}Y_{t_{l-1}}]\right|^2 }_{\errfcnl_3(f)} \,.
\end{aligned}
\end{equation}

Let the hypothesis space $\mH$ be a subset of the span of a linearly independent set $\{\phi_i\}_{i=1}^n$, which we specify in the next section. For $f =\sum_{i=1}^n c_i  \phi_i \in \mH$, we can write the loss functionals $\errfcnl_1(f)$  in \eqref{eq:errfnl_12corr} as
\begin{align}\label{eq:errfcnl1}
      \errfcnl_1(f) & =  \sumLavg  \bigg| \sum_{i=1}^n c_i \Ebracket{\phi_i(X_{t_l})}  - \Ebracket{Y_{t_l}} \bigg|^2   = c^\top \overline{A}_{1} c - 2c^\top \overline{b}_{1} + \Tilde{b}_{1}, 
\end{align}
where $\Tilde{b}_{1}:=   \sumLavg  \Ebracket{Y_{t_l}}^2$, and the matrix $\overline{A}_{1} $ and the vector $\overline{b}_{1} $ are given by  
        \begin{equation}\label{eq:Ab1}
        \begin{aligned}
        \overline{A}_{1}(i,j) &:=  \sumLavg \underbrace{ \Ebracket{\phi_i(X_{t_l})}\Ebracket{\phi_j(X_{t_l})}}_{A_{1,l}(i,j)}, \quad   \overline{b}_{1}(i) :=  \sumLavg  \underbrace{ \Ebracket{\phi_i(X_{t_l}) } \Ebracket{Y_{t_l}} }_{b_{1,l}(i)}. 
        \end{aligned}
    \end{equation}
Similarly, we can write $\errfcnl_2(f)$ and $\errfcnl_3(f)$ in \eqref{eq:errfnl_12corr} as
\begin{equation}\label{eq:errfcnl2corr}
\begin{aligned}
      \errfcnl_2(f)& =  \sumLavg \bigg| \sum_{i=1}^n c_ic_j \underbrace{ \Ebracket{ \phi_i \phi_j (X_{t_l})} }_{A_{2,l}(i,j)}  - \underbrace{\Ebracket{Y_{t_l}^2} }_{b_{2,l}} \bigg|^2.  \\
          \errfcnl_3(f) &=  \sumLavg  \bigg|\sum_{i=1}^n c_ic_j \underbrace{ \Ebracket{\phi_i (X_{t_{l-1}}) \phi_j (X_{t_l})} }_{A_{3,l}(i,j)}  - \underbrace{  \Ebracket{Y_{t_{l-1}} Y_{t_l}} }_{b_{3,l}} \bigg|^2.  
\end{aligned}
\end{equation}
Thus, with the above notations in \eqref{eq:Ab1}-\eqref{eq:errfcnl2corr}, the minimizer of the loss functional $\errfcnl(f)$ over $\mH$ is 
\begin{equation}\label{eq:minimizer1}
\begin{aligned}
\widehat f_\mH & := \sum_{i=1}^n \widehat c_i \phi_i, \quad 
 \widehat c := \argmin{c\in \R^n \text{ s.t. } \sum_{i=1}^n c_i  \phi_i \in \mH} \errfcnl(c), \, \text{ where } \\
\errfcnl(c)  & := w_1[ c^\top \overline{A}_{1} c - 2c^\top \overline{b}_{1} + \Tilde{b}_1] + \sum_{k=2}^3 w_k \sumLavg \left|  c^\top A_{k,l} c - b_{k,l} \right|^2.
\end{aligned}
\end{equation}
Here, with an abuse of notation, we denote $\errfcnl(\sum_{i=1}^n c_i  \phi_i)$ by $\errfcnl(c)$.

In practice, with data $\{Y_{[t_1:t_N]}^{(m)} \}_{m=1}^M$, we approximate the expectations involving the observation process $(Y_t)$ by the corresponding empirical means as in \eqref{eq:exp_Y}. Meanwhile, we approximate the expectations involving the state process $(X_t)$ by Monte Carlo as in \eqref{eq:exp_X}, using $M'$ trajectories. We assume that the sampling errors in the expectations of $(X_t)$, i.e. in the terms $\{A_{k,l}\}_{k=1}^3$, are negligible, since the basis $\{\phi_i\}$ can be chosen to be bounded functions (such as B-spline polynomials) and $M'$ can be as large as we can afford. We approximate $\{b_{k,l}\}_{k=1}^3$ by their empirical means $\{b_{k,l}^M\}_{k=1}^3$: 
\begin{alignat}{4}
&b_{1,l}(i) &&=  \Ebracket{\phi_i(X_{t_l})}\Ebracket{Y_{t_l}} &&\approx  \Ebracket{\phi_i(X_{t_l})} \frac{1}{M} \sum_{m=1}^M Y_{t_l}^{(m)}  &&=:b_{1,l}^M(i)\,, \\
&b_{2,l} &&=  \Ebracket{ |Y_{t_l}|^2} &&\approx  \frac{1}{M} \sum_{m=1}^M |Y_{t_l}^{(m) }|^2  &&=:b_{2,l}^M\,, \\
&b_{3,l} &&=  \Ebracket{ Y_{t_{l-1}} Y_{t_l} } &&\approx  \frac{1}{M} \sum_{m=1}^M Y_{t_{l-1}}^{(m)} Y_{t_l}^{(m) }  &&=:b_{3,l}^M\,. 
\label{eq:bkl}
\end{alignat}
Then, with $\overline{b}_{1}^M= \frac{1}{L}\sum_{l=1}^L b_{1,l}^M$ and $\widetilde{b}_1^M = \frac{1}{LM}\sum_{l=1}^L\sum_{m=1}^M\left( Y_{t_l}^{(m)}\right)^2 $, the estimator from data is 
\begin{equation}\label{eq:errfcnl-M}
\begin{aligned}
\widehat f_{\mH,M} &  = \sum_{i=1}^n \widehat c_i \phi_i, \qquad 
 \widehat c = \argmin{c\in \R^n \text{ s.t. } \sum_{i=1}^n c_i  \phi_i \in \mH} \errfcnl^M(c),  \, \text{ where }  \\ 
\errfcnl^M(c)  & =w_1[ c^\top \overline{A}_{1} c - 2c^\top \overline{b}_{1}^M + \widetilde{b}_1^M]+ \sum_{k=2}^3 w_k \sumLavg \left|  c^\top A_{k,l} c - b_{k,l}^M \right|^2.
\end{aligned}
\end{equation}
The minimization of $\errfcnl^M(c) $ can be performed with iterative algorithms, with each optimization iteration, with respect to $c$, performed efficiently since the data-based matrices and vectors, $\overline{A}_{1},\overline{b}_{1}^M$ and $ \{ A_{k,l},b_{k,l}^M\}_{k=2}^3$, only need to be computed once. 
The main source of sampling error is the empirical approximation of the moments of the process $(Y_t)$. 
We specify the hypothesis space in the next section and provide a detailed algorithm for the computation of the estimator in Section \ref{sec:algorithm}.  

\begin{remark}[Moments involving It\^o's formula] 
When the data trajectories are continuous in time (or when they are sampled with a high frequency in time), we can utilize additional moments from It\^o's formula. Recall that for $f\in C^2_b$,  applying It\^o  formula for the diffusion process in \eqref{eq:SM}, we have
\[ f(X_{t+\Delta t}) -f(X_{t})  = \int_t^{t+\Delta t}\nabla f\cdot b(X_s)dW_s +\int_t^{t+\Delta t} \L f(X_s)ds,
\]
where the operator $\L$ is 
\begin{equation}\label{eq:Loperator}
\L f = \nabla f \cdot a +\frac{1}{2} Hess(f):b^\top b.
\end{equation}
Hence, $\Ebracket{\Delta Y_{t_l}} = \Ebracket{\L \ftrue(X_{t_{l-1}})}\Delta t + o(\Delta t)$, where $\Delta Y_{t_l} = Y_{t_{l}}- Y_{t_{l-1}}$. 
Thus, when $\Delta t$ is small, we can consider matching the generalized moments \begin{align}\label{eq:errfcnl4}
      \errfcnl_4(f) & =  \sumLavg  \bigg| \Ebracket{\L f(X_{t_{l-1}})} \Delta t  - \Ebracket{\Delta Y_{t_l}} \bigg|^2.
\end{align}
Similarly, we can further consider the generalized moments  $\Ebracket{Y_t\Delta Y_t}$ and $\mathrm{Var}{(\Delta Y_t)}$ and the corresponding quartic loss functionals. Since they require the moments of the first- and second-order derivatives of the observation function, they are helpful when the observation function is smooth with bounded derivatives. 
\end{remark}

\subsection{Hypothesis space and optimal dimension}\label{sec:basisFn_dim}
We let the hypothesis space $\mH$ be a class of bounded functions in $\mathrm{span}\{\phi_i\}_{i=1}^n$, 
\begin{equation} \label{eq:mH_with_bds}
\mH := \{ f\,:\,f= \sum_{i=1}^n c_i\phi_i : \ymin\leq f(x)\leq \ymax \text{ for all } x\in \supp{\rhoT}\},
\end{equation}
where the basis functions $\{\phi_i\}$ are to be specified below, and the empirical bounds
\[\ymin := \min\{Y_{t_l}^{(m)}\}_{l,m=1}^{L,M}, \quad \ymax := \max\{Y_{t_l}^{(m)}\}_{l,m=1}^{L,M}
\]
aim to approximate the upper and lower bounds for $\ftrue$. 
Note that the hypothesis space $\mH$ is a bounded convex subset of the linear space $\mathrm{span}\{\phi_i\}_{i=1}^n$. 
While the pointwise bound constraints are for all $x$, in practice, for efficient computation, we apply these constraints at representative points, for example at the mesh-grid points used when the basis functions are piecewise polynomials. One may apply stronger constraints, such as requiring time-dependent bounds to hold at all times: $\ymin(t)\leq \sum_i^n c_if_i(x) \leq \ymax(t)$ for each time $t$, where $\ymin(t)$ and $\ymax(t)$ are the minimum and maximum of the data set $\{Y_t^{(m)}\}_{m=1}^M$. 

\paragraph{Basis functions.}
As basis functions $\{\phi_i\}$ for the subspace containing $\mH$ we choose B-spline basis consisting of piecewise polynomials (see Appendix \ref{sec:append_HypoSpace} for details). 
To specify the knots of B-spline functions, we introduce a density function $\rhoT^L$, which is the average of the probability densities $\{p_{t_l}\}_{l=1}^L$ of $\{X_{t_l}\}_{l=1}^L$: 
\begin{equation}\label{eq:rho}
\rhoT^L(x) = \sumLavg p_{t_l}(x)\quad \xrightarrow[]{L\to \infty} \,  \rhoT(x)=\intTavg p_t(x)dt, 
\end{equation}
when $t_L=T$ and $\max_{1\leq l \leq L} |t_{l}-t_{l-1}|\to 0$. 
Here $\rhoT^L$ (and its continuous time limit $\rhoT(x)$) describes the intensity of visits to the regions explored by the process $(X_t)$.  
The knots of the B-spline function are from a uniform partition of $[R_{min}, R_{max}]$, the smallest interval enclosing the support of $\rhoT^L$.  
Thus, the basis functions $\{\phi_i\}$ are piecewise polynomials with knots adaptive to the state model which determines $\rhoT^L$.

\paragraph{Dimension of the hypothesis space.} It is important to select a suitable dimension of the hypothesis space to avoid under- or over-fitting. We select the dimension in two steps. First, we introduce an algorithm, namely \emph{Cross-validating Estimation of Dimension Range} (CEDR), to estimate the range of the dimension from the quadratic loss functional $\errfcnl_1$. Its main idea is to avoid the sampling error amplification due to an unsuitably large dimension. The sampling error is estimated from data by splitting the data into two sets. Then, we select the optimal dimension that minimizes the 2-Wasserstein distance between the measures of data and prediction. See Appendix \ref{sec:append_HypoSpace} for details. 

\subsection{Algorithm}\label{sec:algorithm}
We summarize the above method of nonparametric regression with generalized moments in Algorithm \ref{alg:main}. It minimizes a quartic loss function with the upper and lower bound constraints, 
and we perform the optimization with multiple initial conditions (see Appendix \ref{sec:multiIC} for the details).  

\begin{algorithm}[H]
{\small
\caption{Estimating the observation function by nonparametric generalized moment methods}\label{alg:main}
\begin{algorithmic}[1]
\Require{The  state model and data $\{Y_{t_0:t_L}^{(m)} \}_{m=1}^M$ consisting of multiple trajectories of the observation process.}
\Ensure{Estimator $\widehat f$.}
\State  Estimate the empirical density $\rhoT$ in \eqref{eq:rho}  and find its support $[R_{min}, R_{max}]$. 
\State Select a basis type, Fourier  or B-spline, with an estimated dimension range $[1,N]$ (by Algorithm \ref{alg:DimensionRange}), and compute the basis functions as described in Section \ref{sec:basisFn_dim}. 
\For{$n =1:N$}
	 \State Compute the moment matrices in \eqref{eq:Ab1}-\eqref{eq:errfcnl2corr}  and the vectors  $b_{k,l}^M$ in \eqref{eq:bkl}. 	
	\State Find the estimator $\widehat c_n$ by optimization with multiple initial conditions. Compute and record the values of the loss functional and the 2-Wasserstein distances.  
\EndFor 
\State Select the optimal dimension $n^*$ (and degree if B-spline basis) that has the minimal 2-Wasserstein distance in \eqref{eq:W2}. Return the estimator $\widehat f = \sum_{i = 1}^{n^*} c^i_{n^*} \phi_i$.
\end{algorithmic}
}
\end{algorithm}

\paragraph{Computational complexity} The computational complexity is driven by the construction of the normal matrix and vectors and the evaluation of the 2-Wasserstein distances, which require computations of order $\mathrm{O}(n^2 LM)$ and $\mathrm{O}(n LM)$, respectively. Thus, the total computational complexity is of order  $\mathrm{O}((n^2+n)LM)$.

\subsection{Tolerance to noise in the observations}\label{sec:noisyAlg}
The (generalized) moment method can tolerate large additive observation noise if the distribution of the noise is known. The estimation error caused by the noise is at the scale of the sampling error, which is negligible when the sample size is large. 

More specifically, suppose that we observe $\{Y_{t_0:t_L}^{(m)} \}_{m=1}^M$ from the observation model 
\begin{equation}\label{eq:obsModel_noisy}
Y_{t_l} = \ftrue(X_{t_l}) + \eta_{t_l},
\end{equation}
 where $\{\eta_{t_l}\}$ is sampled from a process $(\eta_t)$ that is independent of $(X_t)$ and has moments 
 \begin{equation}\label{eq:xi}
 \E[\eta_t] = 0, \quad C(s,t) = \E[\eta_t \eta_s], \text{ for } s,t\geq 0. 
 \end{equation}
 A typical example is when $\eta$ being identically distributed independent Gaussian noise $\mathcal{N}(0,\sigma^2)$, which gives $C(s,t) = \sigma^2\delta(t-s)$.
 
The algorithm in Section \ref{sec2} applies the noisy data with only a few changes. First, note that the loss functional in \eqref{eq:errfnl_12corr} involves only the moments $\E[Y_t]$, $\E[Y_t^2]$ and $\E[Y_{t_l}Y_{t_{l-1}}]$, which are moments of $\ftrue(X_t)$. When $Y_t$ in \eqref{eq:obsModel_noisy} has observation noise specified above, we have
\begin{align*}
\E[\ftrue(X_t)] & = \E[Y_t]  -\E[\eta_t] =  \E[Y_t];  \\
\E[\ftrue(X_t) \ftrue(X_s)] &= \E[Y_t Y_s]  -  \E[\eta_t\eta_s] =  \E[Y_tY_s] - C(t,s)
\end{align*}
for all $t,s\geq 0$. Thus, we only need to change the loss functional to be 
 \begin{equation}\label{eq:errfnl_12corr_noise}
\begin{aligned}
\errfcnl(f)  = &  w_1 \sumLavg \big|\E[f(X_{t_l})] - \E[Y_{t_l}]|^2 + w_2 \sumLavg \left|\E[f(X_{t_l})^2] - \E[Y_{t_l}^2]+ C(t,t)\right|^2  \\
& + w_3 \sumLavg \left|\E[f(X_{t_l})f(X_{t_{l-1}})] - \E[Y_{t_l}Y_{t_{l-1}} ]+ C(t,s)\right|^2. 
\end{aligned}
\end{equation}
Similar to \eqref{eq:errfcnl-M}, the minimizer of the loss functional can be then computed as 
\begin{equation}\label{eq:errfcnl-M_noise}
\begin{aligned}
\widehat f_{\mH,M} &  = \sum_{i=1}^n \widehat c_i \phi_i, \quad 
 \widehat c = \argmin{c\in \R^n \text{ s.t. } \sum_{i=1}^n c_i  \phi_i \in \mH} \errfcnl^M(c),  \, \text{ where }  \\
\errfcnl^M(c)  & =w_1[ c^\top \overline{A}_{1} c - 2c^\top \overline{b}_{1}^M + \widetilde{b}_1^M]+ w_2 \sumLavg \left|  c^\top A_{2,l} c - b_{2,l}^M + C(t_l,t_l)\right|^2 \\
& +  w_3 \sumLavg \left|  c^\top A_{3,l} c - b_{3,l}^M  + C(t_l,t_{l+1})\right|^2,
\end{aligned}
\end{equation}
where all the $A$-matrices and $b$-vectors are the same as before (e.g., in \eqref{eq:Ab1}--\eqref{eq:errfcnl2corr} and \eqref{eq:bkl}). 

Note that the observation noise introduces sampling errors through $b_{1}^M$, $b_{2,l}^M$ and $b_{2,l}^M$, which are at the scale $\mathrm{O}(\frac{1}{\sqrt{M}})$. Also, note the $A$-matrices are independent of the observation noise. Thus, the observation noise affects the estimator only through the sampling error at the scale $\mathrm{O}(\frac{1}{\sqrt{M}})$, the same as the sampling error in the estimator from noiseless data. 

\section{Identifiability}\label{sec3}

We discuss in this section the identifiability of the observation function by those loss functionals in the previous section. 
We show that $\errfcnl_1$, the quadratic loss functional based on the 1st-order moments in \eqref{eq:errfcnl1}, can identify the observation function in the $L^2(\rhoT^L)$-closure of a reproducing kernel Hilbert space (RKHS) that is intrinsic to the state model. 
In addition,  the loss functional $\errfcnl_4$ in \eqref{eq:errfcnl4} based on the It\^o formula, enlarges the function space of identifiability.
We also discuss, in Section \ref{sec:nonID}, some limitations of the loss functional $\errfcnl$ in \eqref{eq:errfnl_12corr_noise}, that combines the quadratic and quartic loss functionals; in particular, symmetry and stationarity may prevent us from identifying the observation function when using only generalized moments. 

The starting point is a definition of identifiability, which is a generalization of the uniqueness of minimizer of a loss function in parametric inference (see e.g., \cite[page 431]{BD91} and \cite{FY03}). 

\begin{definition}[Identifiability]\label{def:identifiability} 
 We say that the observation function $\ftrue$  is \emph{identifiable} by a data-based loss functional $\errfcnl$ on a function space $H$ if $\ftrue$ is the unique minimizer of $\errfcnl$ in $H$. 
\end{definition}
The identifiability consists of three elements: a loss functional  $\errfcnl$, a function space $H$, and a unique minimizer for the loss functional in $H$. When the loss functional is quadratic (such as $\errfcnl_1$ or $\errfcnl_4$), it has a unique minimizer in a Hilbert space iff its Frech\'et derivative is invertible in the Hilbert space; thus, the main task is to find such function spaces \cite{LLMTZ21,LangLu21,LLA22}. We will specify such function spaces for $\errfcnl_1$ and/or $\errfcnl_4$ in Section \ref{sec:ID-RKHS}. 
We note that these function spaces do not take into account the constraints of upper and lower bounds, which generically lead to minimizers near or at the boundary of the constrained set. This consideration applies also to the piecewise quadratic functionals $\errfcnl_2$ and $\errfcnl_3$, which can be viewed as providing additional constraints (see Section \ref{sec:nonID}). 

\subsection{Identifiability by quadratic loss functionals} \label{sec:ID-RKHS}
We consider the quadratic loss functionals $\errfcnl_1$ and $\errfcnl_4$, and show that they can identify the observation function in the $L^2(\rhoT^L)$-closure of reproducing kernel Hilbert spaces (RKHSs) that are intrinsic to the state model.
\begin{assumption}\label{assumption} We make the following assumptions on the state-space model. 
\begin{itemize}
\item The coefficients in the state model \eqref{eq:SM} satisfy a global Lipschitz condition, and therefore also a linear growth condition: there exists a constant $C>0$ such that $|a(x)-a(y)| + |b(x)-b(y)|\leq C|x-y|$ for all $x,y\in \R$, and $|a(x)|+|b(x)|\leq C(1+|x|)$. Furthermore, we assume that $\inf_{x\in \R}b(x) >0$ for all $x\in \R$.
\item The observation function $\ftrue$ satisfies $\sup_{t\in[0,t_L]}\Ebracket{|\ftrue(X_t)|^2}< \infty$. 
\end{itemize}
\end{assumption}

\begin{theorem}  \label{thm:main}
Given discrete-time data $\{Y_{t_0:t_L}^{(m)} \}_{m=1}^M$ from the state-space model {\rm \eqref{eq:SM}} satisfying Assumption {\rm\ref{assumption}}, let $\errfcnl_1$ and $\errfcnl_4$ be the loss functionals defined in \eqref{eq:errfnl_12corr} and \eqref{eq:errfcnl4}. Denote $p_t(x)$ the density of the state process $X_t$ at time t, and recall that $\rhoT^L$ in \eqref{eq:rho} is the average, in time, of these densities. Let $\L^*$ be the adjoint of the 2nd-order elliptic operator $\L$ in \eqref{eq:Loperator}. Then,  
\begin{itemize}
\item[(a)]  $\errfcnl_1$ has a unique minimizer in $H_1$, the $L^2(\rhoT^L)$ closure of the RKHS $\mH_{K_1}$ with reproducing kernel 
\begin{equation}\label{eq:K1}
K_1 (x,x') = \frac{1}{\rhoT^L(x)\rhoT^L (x')} \sumLavg p_{t_l}(x)p_{t_l}(x'), 
\end{equation}
for $(x,x')$ such that $\rhoT^L(x)\rhoT^L(x')>0$, and $K_1(x,x')=0$ otherwise. 
When the data is continuous ($L\to \infty$),  we have $ K_1(x,x') =  \frac{1}{\rhoT(x)\rhoT(x')} \intTavg p_{t}(x)p_{t}(x')dt$.  
 \item[(b)]  $\errfcnl_4$ has a unique minimizer in $H_4$, the $L^2(\rhoT^L)$ closure of the RKHS $\mH_{K_4}$ with reproducing kernel 
\begin{equation}\label{eq:K4}
K_4 (x,x') = \frac{1}{\rhoT^L(x)\rhoT^L(x')} \sumLavg \L^*p_{t_l}(x)  \L^*p_{t_l}(x'),
\end{equation}
for $(x,x')$ such that $\rhoT^L(x)\rhoT^L(x')>0$, and $K_4(x,x')=0$ otherwise. 
When the data is continuous,  we have $ K_4(x,x') =  \frac{1}{\rhoT(x)\rhoT(x')} \intTavg \L^* p_{t}(x) \L^*p_{t}(x')dt$.  

\item[(c)] $\errfcnl_1+\errfcnl_4$ has a unique minimizer in $H$, the $L^2(\rhoT^L)$ closure of the RKHS $\mH_{K}$ with reproducing kernel 
 \begin{equation}\label{eq:K14}
K (x,x') = \frac{1}{\rhoT^L(x)\rhoT^L(x')} \sumLavg \left[ p_{t_l}(x)p_{t_l}(x') + \L^*p_{t_l}(x)  \L^*p_{t_l}(x') \right],
\end{equation}
for $(x,x')$ such that $\rhoT^L(x)\rhoT^L(x')>0$, and $K(x,x')=0$ otherwise. 
Similarly,  we have $ K(x,x') =  \frac{1}{\rhoT(x)\rhoT(x')} \intTavg [ p_{t}(x)p_{t}(x')+ \L^* p_{t}(x) \L^*p_{t}(x')] dt$ for continuous data. 
 \end{itemize}
In particular,  $\ftrue$ is the unique minimizer of these loss functionals if $\ftrue$ is in $H_1$, $H_4$ or $H$. 
\end{theorem}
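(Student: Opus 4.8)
The plan is to recognize each of the three loss functionals as the quadratic form of a positive integral operator on $L^2(\rhoT^L)$, and then to use the standard correspondence between such operators and their reproducing kernel Hilbert spaces (Appendix~\ref{sec:appendixA}) to identify simultaneously the minimizer and the space on which it is unique.

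First I would reduce $\errfcnl_1$ to a quadratic form. Since $\E[Y_{t_l}] = \int \ftrue(x)\, p_{t_l}(x)\,dx$ and Assumption~\ref{assumption} guarantees $\ftrue \in L^2(\rhoT^L)$ (because $\int |\ftrue|^2 \rhoT^L\,dx = \sumLavg \E[|\ftrue(X_{t_l})|^2] < \infty$), I set $\psi_l := p_{t_l}/\rhoT^L$, which is bounded by $L$ since $\rhoT^L = \sumLavg p_{t_l}$. Then $\E[f(X_{t_l})] - \E[Y_{t_l}] = \langle f - \ftrue,\ \psi_l\rangle_{L^2(\rhoT^L)}$, so with $g := f - \ftrue$,
\[
\errfcnl_1(f) = \sumLavg \big|\langle g,\ \psi_l\rangle_{L^2(\rhoT^L)}\big|^2 = \langle g,\ L_{K_1} g\rangle_{L^2(\rhoT^L)},
\]
where $L_{K_1} g = \int K_1(\cdot,x')\, g(x')\, \rhoT^L(x')\,dx' = \sumLavg \langle g,\psi_l\rangle\, \psi_l$ is exactly the integral operator with kernel $K_1$ of \eqref{eq:K1}. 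This makes $K_1$ manifestly symmetric and positive semidefinite (a sum of products $\psi_l(x)\psi_l(x')$), and $L_{K_1}$ a positive, self-adjoint, compact operator (of finite rank $\le L$ in the discrete-time case).

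Next I would invoke the RKHS–operator dictionary: the $L^2(\rhoT^L)$-closure of $\mathcal{H}_{K_1}$ equals $\overline{\mathrm{range}(L_{K_1})} = \ker(L_{K_1})^{\perp}$, yielding the orthogonal splitting $L^2(\rhoT^L) = H_1 \oplus \ker(L_{K_1})$. The minimization is then immediate: $\errfcnl_1 \ge 0$ is convex, its Fr\'echet derivative at $f \in H_1$ in a direction $h \in H_1$ is $2\langle h,\ L_{K_1}(f-\ftrue)\rangle_{L^2(\rhoT^L)}$, and since $L_{K_1}(f-\ftrue) \in \mathrm{range}(L_{K_1}) \subseteq H_1$ while annihilating every $h \in H_1$, it must vanish. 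Hence any minimizer satisfies $f - \ftrue \in \ker(L_{K_1})$; combined with $f \in H_1 = \ker(L_{K_1})^{\perp}$, the kernel-component of $f$ is forced to be zero, so $f = P_{H_1}\ftrue$ is the unique minimizer (two minimizers would differ by an element of $\ker(L_{K_1}) \cap H_1 = \{0\}$). In particular the minimum value is $0$, attained at $\ftrue$ exactly when $\ftrue \in H_1$, which proves (a) and the final assertion for $\errfcnl_1$.

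Parts (b) and (c) follow the same template. For $\errfcnl_4$ the identity $\E[\L f(X_{t_{l-1}})] = \int \L f\, p_{t_{l-1}}\,dx = \int f\, \L^* p_{t_{l-1}}\,dx$ (integration by parts, with $\L^*$ the Fokker–Planck operator) replaces $p_{t_l}$ by $\L^* p_{t_l}$ everywhere, producing $L_{K_4}$ with kernel $K_4$ and the same conclusion on $H_4$; the $(\Delta t)^2$ prefactor and the $o(\Delta t)$ remainder do not affect the minimizer in the continuous-time/exact form. For $\errfcnl_1 + \errfcnl_4$ one adds the two quadratic forms, so $L_K = L_{K_1} + L_{K_4}$ has kernel $K = K_1 + K_4$ and $\ker(L_K) = \ker(L_{K_1}) \cap \ker(L_{K_4})$, giving $H = \ker(L_K)^{\perp} = \overline{\mathcal{H}_K}$ and the argument repeats verbatim. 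I expect the main obstacle to be the rigorous second step — showing that the $L^2(\rhoT^L)$-closure of the RKHS coincides with $\ker(L_K)^{\perp}$, the correct non-degeneracy space — together with verifying that $K_4$ is a bona fide reproducing kernel; the latter hinges on $\L^* p_{t_l}$ being well defined and lying in $L^2(\rhoT^L)$, which is where the uniform ellipticity $\inf_x b(x) > 0$ and the smoothing of the Fokker–Planck flow enter, and on controlling the boundary terms in the integration by parts for $f$ in the relevant class.
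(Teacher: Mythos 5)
Your proposal is correct and follows essentially the same route as the paper: rewrite each loss as the quadratic form $\langle f-\ftrue, L_K(f-\ftrue)\rangle_{L^2(\rhoT^L)}$ of the integral operator with kernel $K_1$, $K_4$, or $K_1+K_4$ (via the weights $\psi_l=p_{t_l}/\rhoT^L\le L$ and integration by parts introducing $\L^*$), then invoke the Mercer/spectral characterization $\overline{\mH_K}=\ker(L_K)^\perp$ to get uniqueness of the minimizer in $H$ — exactly the content of the paper's Lemma~\ref{lemma:K1} and its proof of Theorem~\ref{thm:main}. Your extra step identifying the minimizer as $P_{H_1}\ftrue$ via the vanishing Fr\'echet derivative is precisely the operator view the paper records in Remark~\ref{rmk:ill-posed}, and the caveats you flag (square integrability of the kernels, boundary terms in the integration by parts for $K_4$, the $o(\Delta t)$ remainder) are likewise glossed over or deferred in the paper itself.
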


To prove this theorem, we first introduce an operator characterization of the RKHS $\mH_{K_1}$ in the next lemma. Similar characterizations hold for the RKHSs $\mH_{K_1}$ and $\mH_{K}$.    
\begin{lemma}\label{lemma:K1}
The function $K_1$ in \eqref{eq:K1} is a Mercer kernel, that is, it is continuous, symmetric and positive semi-definite. Furthermore, $K_1$ is square integrable in $L^2(\rhoT^L\times \rhoT^L)$, and it defines a compact positive integral operator $L_{K_1}: L^2(\rhoT^L)\to L^2(\rhoT^L)$: 
\begin{equation}\label{eq:L_K1}
[L_{K_1} h](x') = \int h(x)K_1(x,x') \rhoT^L(x)dx.
\end{equation}
Also, the RKHS $\mH_{K_1}$ has the operator characterization: $\mH_{K_1} = L^{1/2}_{K_1} (L^2(\rhoT^L))$ and $\{\sqrt{\lambda_i} \psi_i \}_{i=1}^\infty$ is an orthonormal basis of the RKHS $\mH_{K_1}$, where $\{\lambda_i,\psi_i\}$ are the pairs of positive eigenvalues and corresponding eigenfunctions of  $L_{K_1}$. 
\end{lemma}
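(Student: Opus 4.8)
The plan is to exploit the separable, finite-rank structure of $K_1$. Writing $g_l(x) := p_{t_l}(x)/\rhoT^L(x)$ on the set $\{\rhoT^L>0\}$, the defining formula \eqref{eq:K1} reads $K_1(x,x')=\sumLavg g_l(x)g_l(x')$, a finite sum of rank-one kernels. From this form symmetry is immediate, and positive semi-definiteness follows since for any points $x_1,\dots,x_N$ and scalars $c_1,\dots,c_N$ one has $\sum_{i,j}c_ic_jK_1(x_i,x_j)=\sumLavg\big(\sum_i c_i g_l(x_i)\big)^2\ge 0$. Because $\rhoT^L=\sumLavg p_{t_l}$ we have $0\le g_l\le L$, so the kernel is bounded, $0\le K_1\le L^2$. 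Continuity of $K_1$ on $\{\rhoT^L>0\}$ then reduces to continuity (and positivity) of each density $p_{t_l}$; this is the one genuinely analytic input, discussed below. Together these facts show that $K_1$ is a bounded Mercer kernel.

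For the integral operator I would note that a bounded kernel on the probability space $(\R,\rhoT^L\,dx)$ is automatically square integrable, $\int\!\int K_1(x,x')^2\rhoT^L(x)\rhoT^L(x')\,dx\,dx'\le L^4<\infty$, so $L_{K_1}$ is Hilbert--Schmidt and hence compact and self-adjoint on $L^2(\rhoT^L)$. Positivity is the operator-level restatement of the pointwise PSD property: for $h\in L^2(\rhoT^L)$, $\innerp{L_{K_1}h,h}=\sumLavg\big(\int g_l(x)h(x)\rhoT^L(x)\,dx\big)^2\ge 0$. In fact the separable form shows $L_{K_1}$ has rank at most $L$ and that $\mH_{K_1}$ is the finite-dimensional space $\mathrm{span}\{g_1,\dots,g_L\}$; this makes the subsequent eigen-expansion a finite sum, so none of the convergence subtleties of Mercer's theorem arise.

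The operator characterization $\mH_{K_1}=L^{1/2}_{K_1}(L^2(\rhoT^L))$, together with the claim that $\{\sqrt{\lambda_i}\,\psi_i\}$ is an orthonormal basis of $\mH_{K_1}$, then follows from the standard spectral theory of positive compact integral operators reviewed in Appendix \ref{sec:appendixA}: the spectral theorem supplies the orthonormal eigenpairs $\{\lambda_i,\psi_i\}$ of $L_{K_1}$ with $\lambda_i>0$ (finitely many here), Mercer's expansion gives $K_1(x,x')=\sum_i\lambda_i\psi_i(x)\psi_i(x')$, and the usual feature-map argument identifies the RKHS and its orthonormal basis. I would cite this general result rather than re-derive it.

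The \textbf{main obstacle} is the regularity needed for continuity: I must argue that the densities $p_{t_l}$ exist, are continuous, and are strictly positive on the interior of the support of $\rhoT^L$. Under Assumption \ref{assumption} the SDE \eqref{eq:SM} has globally Lipschitz coefficients and is uniformly elliptic ($\inf_x b(x)>0$), so parabolic regularity for the associated forward Kolmogorov (Fokker--Planck) equation $\partial_t p_t=\L^* p_t$ yields smooth, strictly positive transition densities, which is exactly what makes $g_l$, and hence $K_1$, continuous. The only remaining point is the behaviour on $\{\rhoT^L=0\}$, where $K_1$ is set to $0$: since this set is $\rhoT^L$-null it is negligible for every $L^2(\rhoT^L)$ statement, so it affects neither $L_{K_1}$ nor its spectral decomposition, and one may restrict the whole analysis to (the closure of) the support. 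An analogous argument, replacing $p_{t_l}$ by $\L^*p_{t_l}$ (and by the pair $(p_{t_l},\L^*p_{t_l})$), handles $\mH_{K_4}$ and $\mH_K$.
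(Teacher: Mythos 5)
Your proposal is correct and follows essentially the same route as the paper's own proof: the same pointwise square computation for positive semi-definiteness, the same bound $p_{t_l}\leq L\,\rhoT^L$ (your $g_l\leq L$) yielding boundedness and hence square integrability of $K_1$, compactness and positivity of $L_{K_1}$ from the Hilbert--Schmidt property, and an appeal to Theorem \ref{thm:RKHS} for the operator characterization. Your extra observations---that $L_{K_1}$ has rank at most $L$ so $\mH_{K_1}=\mathrm{span}\{g_1,\dots,g_L\}$ and the eigen-expansion is finite, the parabolic-regularity justification of continuity of the densities, and the remark that $\{\rhoT^L=0\}$ is a $\rhoT^L$-null set---are valid sharpenings that the paper leaves implicit, but they do not alter the argument.
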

\begin{proof}
Since the densities of diffusion process are smooth, the kernel $K_1$ is continuous on the support of $\rhoT^L$ and it is symmetric. It is positive semi-definite (see Appendix \ref{sec:appendixA} for a definition) because for any $(c_1,\ldots,c_n)\in \R^n$ and $(x_1,\ldots,x_n)$, we have 
\[
\sum_{i,j=1}^n c_i c_j K(x_i,x_j) =\sumLavg \sum_{i,j=1}^n c_i c_j \frac{ p_{t_l}(x_i)p_{t_l}(x_j)}{\rhoT^L(x_i)\rhoT^L (x_j)}  = \sumLavg \left(\sum_{i=1}^n c_i\frac{p_{t_l}(x_i)}{\rhoT^L(x_i)}  \right)^2 \geq 0. \]
Thus, $K_1$ is a Mercer kernel. 

To show that $K_1$ is square integrable, note first that $p_{t_l}(x) \leq \max_{1\leq k\leq L} p_{t_k}(x) \leq L \rhoT^L(x)$ for any $x$. 
Thus  for each $x,x'$, we have 
\[
 \sumLavg p_{t_l}(x)p_{t_l}(x')\leq L  \rhoT^L(x)\rhoT^L(x') 
 \] 
 and $K_1(x,x')\leq L$. 
It follows that $K_1$ is  in $L^2(\rhoT^L\times \rhoT^L)$. 

Since $K_1$ is positive definite and square integrable, the integral operator $L_{K_1}$ is compact and positive. The operator characterization follows from Theorem \ref{thm:RKHS}. 
\end{proof}

\begin{remark} 
The above lemma is only applicable to discrete-time observations because it uses the bounds $K_1(x,x')\leq L$. When the data is continuous in time on $[0,T]$, we have $K_1\in L^2(\rhoT\times \rhoT)$ if the support of $\rhoT$ is compact. In fact, to show that $K_1$ is square integrable when $\mathrm{supp}(\rhoT)$ is compact, we note  that the probability densities are uniformly bounded above, that is, $p_t(x) \leq  \max_{y\in \R,s\in [0,T]}p_s(y)$. 
Thus  for each $x,x'$, we have 
\begin{align*}
 \intTavg p_{t}(x)p_{t}(x')dt & \leq  \left| \intTavg p_{t}(x)^2\ dt \right|^{1/2} \left| \intTavg p_{t}(x')^2\ dt \right|^{1/2} \\
& = \rhoT(x)^{1/2}\rhoT(x')^{1/2}  \max_{y\in \R,s\in [0,T]}p_s(y)
\end{align*}
 by Cauchy-Schwartz for the first inequality.  Then,
 \[K_1(x,x') = \frac{1}{\rhoT(x)\rhoT (x')}  \intTavg p_{t}(x)p_{t}(x')dt \leq  \rhoT(x)^{-1/2}\rhoT(x')^{-1/2} \max_{y\in \R,s\in [0,T]}p_s(y). \]
It follows that $K_1$ is  in $L^2(\rhoT\times \rhoT)$: 
\[ \int \int K_1^2(x,x') \rhoT(x)\rhoT(x')dxdx' \leq |\mathrm{supp}(\rhoT)| \max_{y\in \R,s\in [0,T]}p_s(y)^2 <\infty. \]
When $\rhoT$ has non-compact support, it remains to be proved that $K_1\in L^2(\rhoT\times \rhoT)$. 
\end{remark}

\begin{proof}[Proof of Theorem \ref{thm:main}] The proof for (a)--(c) are similar, so we focus on (a) and only sketch the proof for (b)--(c).

To prove (a), we only need to show the uniqueness of the minimizer, because Lemma \ref{lemma:K1} has shown that $K_1$ is a Mercer kernel. Furthermore, note that by Lemma \ref{lemma:K1}, the $L^2(\rhoT^L)$ closure of the RKHS $\mH_{K_1}$ is $H_1 =\overline {\mathrm{span}\{\psi_i\}_{i=1}^\infty}$, the closure in $L^2(\rhoT^L)$ of the eigenspace of $L_{K_1}$ with non-zero eigenvalues, where $L_{K_1}$ is the operator defined in \eqref{eq:L_K1}.

For any $f\in H_1$, with the notation $h=f-\ftrue$, we have $\E[f(X_{t})] - \E[Y_{t}] = \E[h(X_t)]$ for each $t$ (recall that $Y_t = \ftrue(X_t)$). Hence, we can write the loss functional as 
\begin{equation}\label{eq:E1_rkhs}
\begin{aligned}
\errfcnl_1(f)  = & \sumLavg \big|\E[f(X_{t_l})] - \E[Y_{t_l}]|^2 =  \sumLavg \big|\E[h(X_{t_l})]|^2 =   \int \int h(x)h(x') \sumLavg p_{t_l}(x)p_{t_l}(x') dxdx' \\
= &  \int \int h(x)h(x') K_1(x,x') \rhoT^L(x)\rhoT^L(x')dxdx' \geq 0.
\end{aligned}
\end{equation}
Thus, $\errfcnl_1$ attains its unique minimizer in $H_1$ at $\ftrue$ iff $\errfcnl_1(\ftrue +h)=0$ with $h\in H_{1}$ 
implies that $h=0$. 
Note that the second equality in \eqref{eq:E1_rkhs} implies that $\errfcnl_1(\ftrue +h) =0$ iff $\E[h(X_{t_l})]=0$, i.e. $\int h(x) p_{t_l}(x) dx =0 $, for all $t_l$. Then, $\int h(x) p_{t_l}(x) \frac{p_{t_l}(x')}{\rhoT^L(x')} dx =0 $ 
for each $t_l$ and $x'$. Thus, the sum of them is also zero: 
\[
0
=  \int h(x) \sumLavg \frac{p_{t_l}(x)  p_{t_l}(x')}{\rhoT^L(x')\rhoT^L(x)} \rhoT^L(x)  dx 
= \int h(x)K_1(x,x') \rhoT^L(x)dx
\]
for each $x'$. By the definition of the operator $L_{K_1}$, this implies that $L_{K_1} h = 0$. Thus, $h=0$ because $h\in H_1$. 

The above arguments hold true when the kernel $K_1$ is from continuous-time data: one only has to replace $\sumLavg$ by the averaged integral in time. This completes the proof for (a). 

The proof of (b) and (c) are the same as above except the appearance of the operator $\L^*$. Note that $\errfcnl_4$ in \eqref{eq:errfcnl4} reads
$ \errfcnl_4(f)  =  \sumLavg \left|\Ebracket{\L f(X_{t_l})}  - \Ebracket{\Delta Y_{t_l}} \right|^2$, thus, it differs from $\errfcnl_1$ only at the expectation $\Ebracket{\L f(X_{t_l})}$.  By integration by parts, we have 
\[
\Ebracket{\L f(X_s) } = \int \L f(x)p_s(x)dx = \int f(x)\L^*p_s(x) dx
\]
for any $f\in C_b^2$. Then, the rest of the proof for Part (b) follows exactly as above with $K_1$ and $L_{K_1}$ replaced by $K_4$ and $L_{K_4}$.   
\end{proof}

\bigskip
The following remarks highlight the implications of the above theorem. We consider only $\errfcnl_1$, but all the remarks apply also to $\errfcnl_4$ and $\errfcnl_1+ \errfcnl_4$. 

\begin{remark}[An operator view of identifiability]\label{rmk:ill-posed}
The unique minimizer of $\errfcnl_1 $ in $H_1$ defined in Theorem {\rm \ref{thm:main}} is the zero of its Frech\'et derivative: $\widehat f = L_{K_1}^{-1} L_{K_1}\ftrue$, which is $\ftrue$ if $\ftrue\in H_1$. In fact, note that with the integral operator $L_{K_1}$, we can write the loss functional $\errfcnl_1$ as 
\[
\errfcnl_1(f) = \innerp{f-\ftrue,L_{K_1}(f-\ftrue)}_{L^2(\rhoT^L)}. 
\]
Thus, the Frech\'et derivative of $\errfcnl_1$ over $L^2(\rhoT^L)$ is $\nabla \errfcnl_1(f) =L_{K_1}(f-\ftrue)$ and we obtain the unique minimizer. Furthermore, this operator representation of the minimizer conveys two important messages about the inverse problem of finding the minimizer of $\errfcnl_1$: (1) it is \emph{ill-defined} beyond $H_1$. In particularly, it is ill-defined on $L^2(\rhoT^L)$ when $L_{K_1}$ is not strictly positive; (2) the inverse problem is ill-posed on $H_1$, because the operator $L_{K_1}$ is compact and its inverse $L_{K_1}^{-1}$ is unbounded. 
 \end{remark}

\begin{remark}[Identifiability and normal matrix in regression]  Suppose $\mH_n =\mathrm{span}\{\phi_i\}_{i=1}^n$ and denote $f =\sum_{i=1}^n c_i  \phi_i$ with  $\phi_i$ being basis functions such as B-splines. As shown in \eqref{eq:errfcnl1}-\eqref{eq:Ab1}, the loss functional $  \errfcnl_1$ becomes a quadratic function with normal matrix $\overline{A}_{1} =\sumLavg A_{1,l}$ with $A_{1,l} = \mathbf{u}_l^\top \mathbf{u}_l$, where $\mathbf{u}_l = (\Ebracket{\phi_1(X_{t_l})}, \ldots, \Ebracket{\phi_n(X_{t_l})}) \in \R^n$. Thus, the rank of the matrix $\overline{A}_{1}$ is no larger than $\min\{n,L\}$. Note that $\overline{A}_{1}$ is the matrix approximation of $L_{K_1}$ on the basis $\{\phi_i\}_{i=1}^n$ in the sense that 
\[
\overline{A}_{1}(i,j) = \innerp{L_{K_1}\phi_i,\phi_j}_{L^2(\rhoT^L)},
\]
for each $1\leq i,j\leq n$. 
Thus, the minimum eigenvalue of $\overline{A}_{1}$ approximates the minimal eigenvalue of $L_{K_1}$ restricted in $\mH_n$. In particular, if $\mH_n$ contains a nonzero element in the null space of $L_{K_1}$, then the normal matrix will be singular; if $\mH_n$ is a subspace of the $L^2(\rhoT^L)$ closure of $\mH_{K_1}$, then the normal matrix is invertible and we can find a unique minimizer. 
\end{remark}



\begin{remark}[Convergence of estimator]\label{rmk:convergence} 
For a fixed hypothesis space, the estimator converges to the projection of $\ftrue$ in $\mH\cap H_1$ as the data size $M$ increases, at the order $O(M^{-1/2})$, with the error coming from the Monte Carlo estimation of the moments of observations. With data adaptive hypothesis spaces, we are short of proving the minimax rate of convergence as in classical nonparametric regression. This is because of the lack of a coercivity condition {\rm \cite{LZTM19pnas,LLMTZ21}}, since the compact operator $L_{K_1}$'s eigenvalue converges to zero. A minimax rate would require an estimate on the spectrum decay of $L_{K_1}$, and we leave this for future research. 
\end{remark}

\begin{remark}[Regularization using the RKHS]
The RKHS $H_{K_1}$ can be further utilized to provide a regularization norm in the Tikhonov regularization (see {\rm \cite{LLA22}}). It has the advantage of being data adaptive and constrains the learning to take place in the function space of learning.  \end{remark}

\paragraph{Examples of the RKHS.} We emphasize that the reproducing kernel and the RKHS are intrinsic to the state model (including the initial distribution). We demonstrate the kernels by analytically computing them when the process $(X_t)$ is either the Brownian motion or the Ornstein-Uhlenbeck (OU) process. For simplicity, we consider continuous-time data. Recall that when the diffusion coefficient in the state-model \eqref{eq:SM} is a constant, the second-order elliptic operators $\L$ is $\L f =\nabla f \cdot a +\frac{1}{2}b^2 \Delta f$ and its joint operator $\L^*$ is 
\[
 \L^*p_s =- \nabla\cdot (a p_s) +\frac{1}{2}b^2 \Delta p_s, 
\]
where $p_s$ denotes the probability density of $X_s$.

\begin{myexample}[1D Brownian motion]
Let $a=0$ and $b=1$. Assume $p_0(x)= \delta_{x_0}$, i.e., $X_0=x_0$. Then, $X_t$ is the Brownian motion starting from $x_0$ and $p_t(x) =\frac{1}{\sqrt{2\pi t} }e^{-\frac{(x-x_0)^2}{2t}}$ for each $t>0$. We have $\rhoT(x) = \intTavg p_t(x)dt = \frac{x-x_0}{T\sqrt{\pi}} \Gamma(-\frac{1}{2}, \frac{(x-x_0)^2}{2T})$ and 
\[
K_1(x,x') = \frac{1}{\rhoT(x)\rhoT(x')} \intTavg  p_s(x)p_s(x')ds = \frac{T\Gamma(0, \frac{(x-x_0)^2+(x'-x_0)^2}{2T})}{2(x-x_0)(x'-x_0)\Gamma(-\frac{1}{2}, \frac{(x-x_0)^2}{2T})\Gamma(-\frac{1}{2}, \frac{(x'-x_0)^2}{2T})}, 
\]
where $\Gamma(s,x) := \int_x^\infty t^{s-1}e^{-t}dt$ is the upper incomplete Gamma function. Also, we have 
\[ \L^*p_s(x) = \phi_2(s,x)p_s(x), \text{ with } \phi_2(s,x) = \left(\frac{1}{s^2}(x-x_0)^2- \frac{1}{s}  \right).  \]
Thus, the reproducing kernel $K_4$ in \eqref{eq:K4} and $K$ in \eqref{eq:K14} from continuous-time data are
\begin{align*}
K_4(x,x') &= \frac{1}{\rhoT(x)\rhoT(x')}  \intTavg \phi_2(s,x)\phi_2(s,x')  p_s(x)p_s(x')ds;  \\
K(x,x') & = \frac{1}{\rhoT(x)\rhoT(x')}  \intTavg (1 + \phi_2(s,x)\phi_2(s,x') ) p_s(x)p_s(x')ds. 
\end{align*}
\end{myexample}

\begin{myexample}[Ornstein-Uhlenbeck process]   Let $a(x)=\theta x$ and $b=1$ with $\theta>0$. Assume $p_0(x)= \delta_{x_0}$, i.e., $X_0=x_0$.  Then,  $X_t = e^{-\theta t}x_0 + \int_{0}^t e^{-\theta (t-s)}dB_s$. It has a distribution $\mathcal{N}(e^{-\theta t}x_0, \frac{1}{2\theta}(1-e^{-2\theta t}) )$, thus $p_t(x) =\frac{1}{\sqrt{2\pi} \sigma_t }\exp(-\frac{(x-x_{0}^t)^2}{2\sigma_t^2})$, where $x_{0}^t:=e^{-\theta t}x_0$ and $\sigma_t^2 :=\frac{1}{2\theta}(1-e^{-2\theta t}) $. Computing the spatial derivatives, we have 
$
 \L^*p_s(x) = \frac{1}{2}\left[ \frac{(x-x_0^s)^2}{\sigma_s^4} - \frac{1}{\sigma_s^2} \right] p_s(x) - (\theta x p_s(x))' = \phi_2(s,x) p_s(x),
 $
 where 
 \[\phi_2(s,x):=  \left[ \frac{(x-x_0)^2}{2\sigma_s^4}  - \frac{1}{2\sigma_s^2}- \theta +\frac{\theta}{\sigma_s^2} x(x-x_0^s) \right].  \]
The reproducing kernels $K_1$ in \eqref{eq:K1}, $K_4$ in \eqref{eq:K4} and $K$ in \eqref{eq:K14} are
\begin{equation*}
\begin{aligned}
K_1(x,x') &= \frac{1}{\rhoT(x)\rhoT(x')}  \intTavg  p_s(x)p_s(x')ds; \\
K_4(x,x') &= \frac{1}{\rhoT(x)\rhoT(x')}  \intTavg \phi_2(s,x)\phi_2(s,x')  p_s(x)p_s(x')ds; \\
K(x,x') & = \frac{1}{\rhoT(x)\rhoT(x')}  \intTavg (1 + \phi_2(s,x)\phi_2(s,x') ) p_s(x)p_s(x')ds. 
\end{aligned}
\end{equation*}
In particular, when the process is stationary, we have  $K_1(x,x')\equiv 1$ and $K_4(x,x') =0 $ because $\L^* p_s = 0$ when $p_s(x) = \frac{2\theta}{\sqrt{2\pi}} \exp(-\theta x^2)$ is the stationary density.  
\end{myexample}

\subsection{Non-identifiability due to stationarity and symmetry}\label{sec:nonID}
When the hypothesis space $\mH$ has a dimension larger than the RKHS's, the quadratic loss functional $\errfcnl_1$ may have multiple minimizers. The constraints of upper and lower bounds, as well as the loss functionals $\errfcnl_2$ and $\errfcnl_3$, can help to identify the observation function. However, as we show next, identifiability may still not hold due to symmetry and/or stationarity. 

\paragraph{Stationary processes} 
When the process $(X_t)$ is stationary,  we have limited information from the moments in our loss functionals. We have $\errfcnl_1(f)=   \left | \Ebracket{Y_{t_1} }- \Ebracket{f(X_{t_1}) } \right|^2 $
with $K_1(x,x') \equiv 1$, so $\errfcnl_1$ can only identify a constant function. Also, the loss functional $\errfcnl_4=0$ because
\[
 \L^*p_s =\partial_s p_s =0; \Leftrightarrow \E[ \L h(X_s)] =0 \text{ for any } h \in C^2_b. 
\] 
In other words, the function space of identifiability by $\errfcnl_1+\errfcnl_4$ is the space of constant functions. Meanwhile, the quartic loss functionals $\errfcnl_2$ and $\errfcnl_3$ also provide limited information: they become  $\errfcnl_2=\left|\E[f(X_{t_1})^2] - \E[Y_{t_1}^2]\right|^2$ and $\errfcnl_3=\left|\E[f(X_{t_2})f(X_{t_{1}})] - \E[Y_{t_2}Y_{t_{1}}]\right|^2$,  the second-order moment and the temporal correlation at one-time instance. 

To see the limitations, consider the finite-dimensional hypothesis space $\mH$ 
in \eqref{eq:mH_with_bds}. As in \eqref{eq:errfcnl-M}, with $f=\sum_{i=1}^n c_i\phi_i$, the loss functional becomes 
\begin{equation*}
\begin{aligned}
\errfcnl(f)  
= &  c^\top \overline{A}_{1} c - 2c^\top \overline{b}_{1}^M + |\E[Y_{t_1}]|^2+ \sum_{k=2}^3 \left|  c^\top A_{k,1} c - b_{k,1}^M \right|^2 ,
\end{aligned}
\end{equation*}
where $\overline{A}_{1} $ is a rank-one matrix, and $  \sum_{k=2}^3  \left|  c^\top A_{k,1} c - b_{k,1}^M \right|^2$ only bring in two additional constraints. Thus, $\errfcnl$ has multiple minimizers in a linear space with dimension greater than 3. One has to resort to the upper and lower bounds in \eqref{eq:mH_with_bds} for additional constraints, which lead to minimizers on the boundary of the resulted convex sets.


\paragraph{Symmetry} When the distribution of the state process  $X_t$ is symmetric, a moment-based loss functional does not distinguish the true observation function from its symmetric counterpart. More specifically,  if a transformation $R:\R\to\R$ preserves the distribution, i.e., $(X_t, t\geq 0)$ and $( R(X_t), t\geq 0)$ have the same distribution, then $\E[f(X_t)] = \Ebracket{f\circ R(X_t)}$ and $\E[f(X_t)f(X_s)] = \Ebracket{f\circ R(X_t)f\circ R(X_s)}$.  Thus, our loss functional will not distinguish $f$ from $f\circ R$. However, this is totally reasonable: the two functions yield the same observation process (in terms of the distribution), thus the observation data does not provide the necessary information for identifying $f$ from $f\circ R$. 

\begin{myexample}[Brownian motion]   
Consider the standard Brownian motion $X_t$, whose distribution is symmetric about $x = 0$ (because the two processes $(X_t,t\geq 0)$ and $(-X_t,t\geq 0)$ have the same distribution). Let the transformation $R$ be $R(x) = -x$. Then, the two functions $f(x)$ and $f(-x)$ lead to the same observation process, thus they cannot be distinguished from the observations. 
\end{myexample}


\section{Numerical Examples}\label{sec4}
We demonstrate the effectiveness and limitations of our algorithm using synthetic data in representative examples.  
The algorithm works well when the state-model's densities vary appreciably in time to yield a function space of identifiability whose distance to the true observation function is small. In this case, our algorithm leads to a convergent estimator as the sample size increases.
We also demonstrate that when the state process (i.e., the Ornstein-Uhlenbeck process) is stationary or symmetric in distribution (i.e., the Brownian motion), the loss functional can have multiple minimizers in the hypothesis space, preventing us from identifying the observation functions (see Section \ref{sec:nonID_num}).

\subsection{Numerical settings}\label{sec:num-settings}
We first introduce the numerical settings used in the tests.
\paragraph{Data generation.} The synthetic data $\{Y_{t_0:t_L}^{(m)} \}_{m=1}^M$ with $t_l= l\Delta t$ are generated from the state model, which is solved by the Euler-Maruyama scheme with a time-step $\Delta t = 0.01$ for $L = 100$ steps. We will consider sample sizes $M \in \{ \floor{10^{3.5+j\Delta}}: j=0,1,2,3,4,\,\, \Delta = 0.0625\}$ to test the convergence of the estimator.  

To estimate the moments in the $A$-matrices and $b$-vectors in  \eqref{eq:Ab1}--\eqref{eq:errfcnl2corr} by Monte Carlo, we generate a new set of independent trajectories $\{X_{t_l}^{(m)}\}_{i=1}^{M'}$ with $M'=10^6$.  
We emphasize that these $X$ samples are independent of the data  $\{Y_{t_0:t_L}^{(m)} \}_{m=1}^M$. 

\paragraph{Inference algorithm.} We follow Algorithm \ref{alg:main} to search for the global minimum of the loss functionals in \eqref{eq:errfcnl-M}. The weights for the $\mathcal{E}_k$'s are $L\sqrt{M}/\| m_k^Y\|$,  where $\|\cdot\|$ is the Euclidean norm on $\R^L$ and 
\[
m_k^Y(l) = \frac{1}{M}\sum_{m=1}^M (Y_{t_l}^{(m)})^k \text{ for } k =1,2  \quad 
\text{ and } \quad 
m_3^Y(l) = \frac{1}{M}\sum_{m=1}^M Y_{t_l}^{(m)} Y_{t_{l+1}}^{(m)}, 
\]
for $l=0,1,\cdots,L-1$.

For each example, we test B-spline hypothesis spaces $\mH$ with dimension in the range $[1,N]$, which is selected by Algorithm \ref{alg:DimensionRange} with degrees in $ \{0,1,2,3\}$. We select the optimal dimension and degree with the minimal 2-Wasserstein distance between the predicted and true distribution of $Y$. The details are presented in Section \ref{sec:dimSelection}.

\paragraph{Results assessment and presentation.} We present three aspects of the estimator $\widehat f$:  
\begin{itemize}[leftmargin=*]\setlength\itemsep{-0.2mm}
\item \textbf{Estimated and true functions.} We compare the estimator with the true function $\ftrue$, along with the $L^2(\rhoT^L)$ projection of $\ftrue$ to the linear space expanded by the elements of $\mH$. 

\item \textbf{2-Wasserstein distance.} We present the 2-Wasserstein distance (see \eqref{eq:W2}) between the distributions of $Y_{t_l} =\ftrue(X_{t_l})$ and $\widehat f(X_{t_l})$ for each time with training data and a new set of randomly generated data. 

\item \textbf{Convergence of $L^2(\rhoT^L)$ error.} We test the convergence of the estimator in $L^2(\rhoT^L)$ as the sample size $M$ increases. The $L^2(\rhoT^L)$ error is computed by the Riemann sum approximation. We present the mean and standard deviation of $L^2(\rhoT^L)$ errors from 20 independent simulations. The convergence rate is also highlighted, and we compare it with the minimax convergence rate in classical nonparametric regression (see e.g., \cite{Gyorfi06,LZTM19pnas}), which is $\frac{s}{2s+1}$ with $s-1$ being the degree of the B-spline basis. This minimax rate is not available yet for our method, see  Remark \ref{rmk:convergence}. 
\end{itemize}

\begin{figure}[h!]
    \centering
         \includegraphics[width=1\textwidth]{./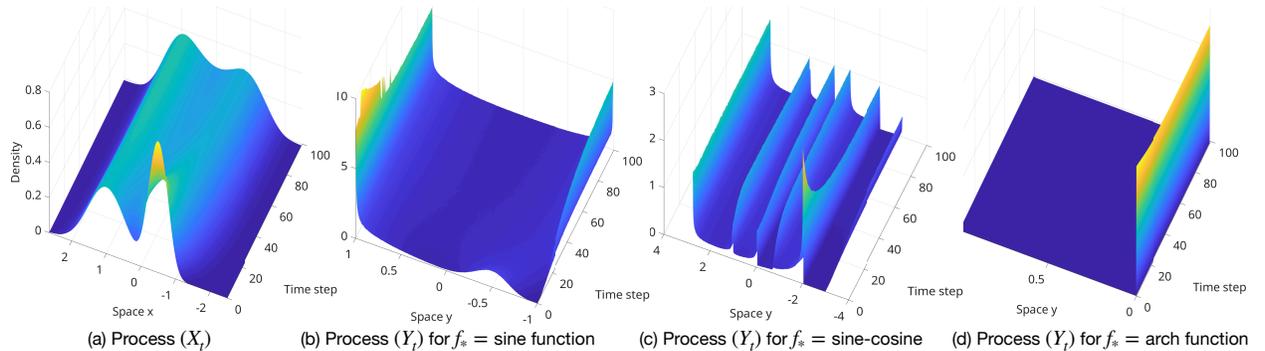}
        \caption{\small Empirical densities from the data trajectories of the state-model process $(X_{t_l})$  in \eqref{eq:DW} and the observation processes $(Y_{t_l})$ with $\ftrue = f_i$, where $f_i$'s are three observation functions in \eqref{eq:obsFns}. Since we do not have data pairs between $(X_{t_l}^{(m)},Y_{t_l}^{(m)})$, these empirical densities are the available information from data. Our goal is to find the function $f$ in the operator that maps the densities of $\{X_{t_l}\}$ to the densities of $\{Y_{t_l}\}$.           
         }
        \label{fig:DW-densities}
\end{figure}

\subsection{Examples} \label{sec:goodExamples}
The state model we consider is a stochastic differential equation with the double-well potential 
\begin{align}
 dX_t & =  (X_t -X_t^3)dt + dB_t, X_{t_0}\sim p_{t_0} \label{eq:DW} 
 \end{align}
where the density of $X_{t_0}$ is the average of $\mathcal{N}(-0.5, 0.2)$ and $\mathcal{N}(1,0.5)$. The  distribution of $X_{t_0:t_L}$ is non-symmetric and far from stationary (see Figure \ref{fig:DW-densities}(a)). Thus the quadratic loss functional $\errfcnl_1$ provides a rich RKHS space for learning.

We consider three observation functions $f(x)$ representing typical challenges: nearly invertible, non-invertible, and non-invertible discontinuous, in the set $\mathrm{supp} (\rhoT)$: 
\begin{equation}\label{eq:obsFns}
\begin{aligned}
&  \text{Sine function: }& \quad f_1(x) =& \sin(x); \\ 
&  \text{Sine-Cosine function: }& \quad f_2(x) =&2\sin(x) + \cos(6x); \\
&\text{Arch function: } & \quad f_3(x) =& \left( - 2(1-x)^3 + 1.5 (1-x) + 0.5\right) \mathbf{1}_{x\in [0,1]}.   
\end{aligned}
\end{equation}
These functions are shown in   \ref{fig:DW-sine}(a) --\ref{fig:DW-Arch}(a). 
They lead to observation processes with dramatically different distributions, as shown in Fig.\ref{fig:DW-densities}(b-d).

\begin{figure}[H]
    \centering
 \includegraphics[width=0.95\textwidth]{./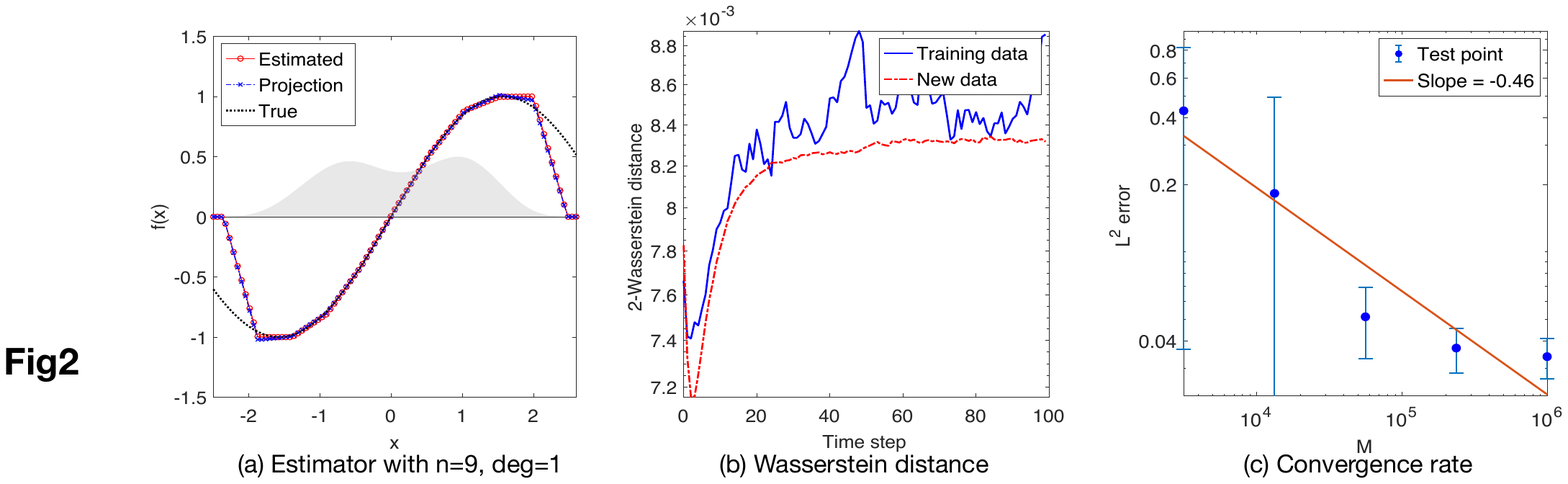}
        \caption{ \small Learning results of Sine function $f_1(x) = \sin(x)$   with model (\ref{eq:DW}).          }
        \label{fig:DW-sine}
 \end{figure}
 \begin{figure}[h!]
     \centering
  \includegraphics[width=0.95\textwidth]{./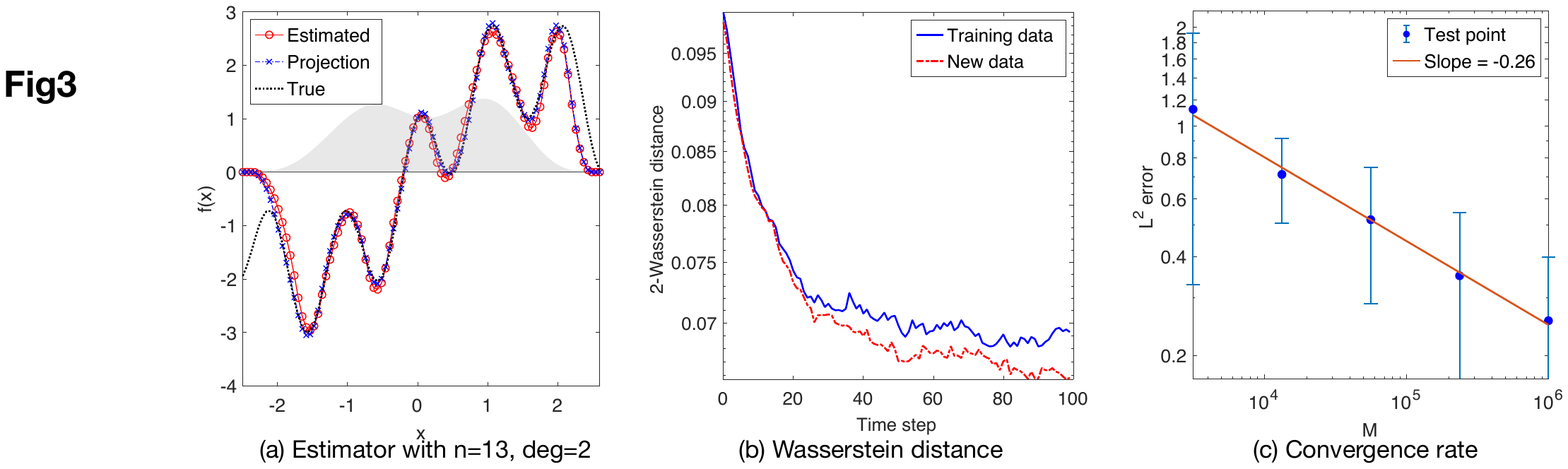}
       \caption{ \small Learning results of Sine-Cosine function $f_2(x) = 2\sin(x) + \cos(6x)$  with model (\ref{eq:DW}).  }
        \label{fig:DW-S1C6}
 \end{figure}
 \begin{figure}[h!]
     \centering
  \includegraphics[width=0.95\textwidth]{./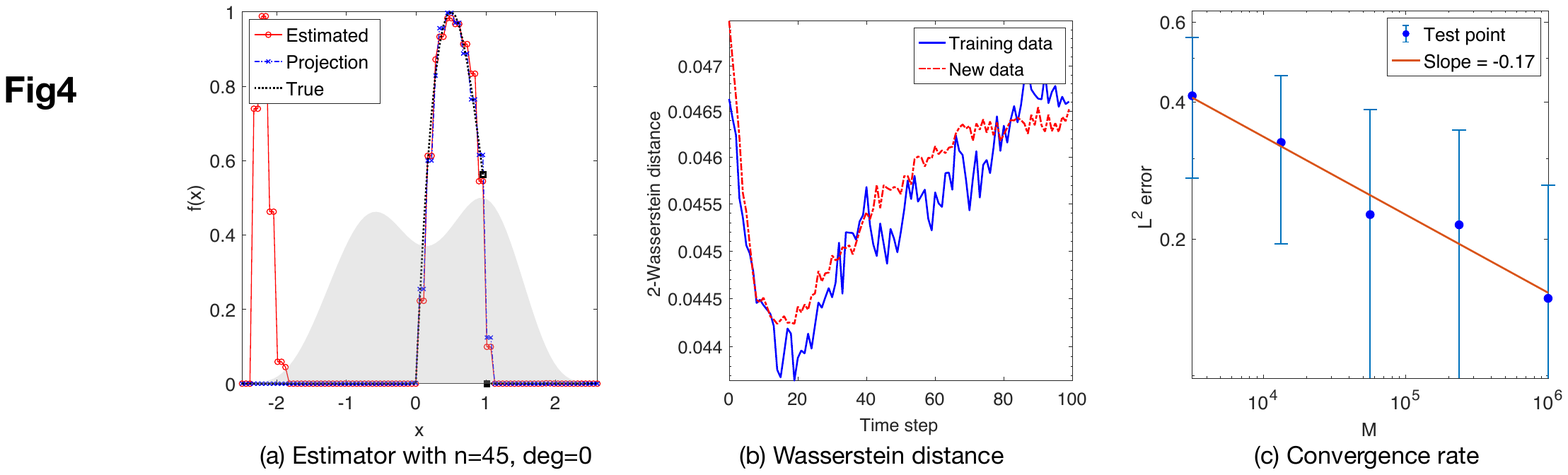}
        \caption{ {\small Learning results of Arch function $f_3$ with model (\ref{eq:DW}).} } 
        \label{fig:DW-Arch}  
\end{figure}

The learning results for these three functions are shown in Figure \ref{fig:DW-sine}--\ref{fig:DW-Arch}. 
For each of these three observation functions, we present the estimator with the optimal hypothesis space, the 2-Wasserstein distance in prediction and the convergence of the estimator in $L^2(\rhoT^L)$ (see Section \ref{sec:num-settings} for details).  

  \textbf{Sine function:}  Fig.~\ref{fig:DW-sine}(a) shows the estimator with degree-1 B-spline basis with dimension $n=9$ for $M=10^6$. 
  The $L^2(\rhoT^L)$ error is $0.0245$ and the relative error is 3.47\%. Fig.~\ref{fig:DW-sine}(b) shows that the Wasserstein distances are small at the scale $10^{-3}$.   Fig.~\ref{fig:DW-sine}(c) shows that the convergence rate of the $L^2(\rhoT^L)$ error is $0.46$.
  This rate is close to the minimax rate $\frac{2}{5}= 0.4$.

\textbf{Sine-Cosine function:}  Fig.~\ref{fig:DW-S1C6}(a) shows the estimator with degree-2 B-spline basis with dimension $n=13$. The $L^2(\rhoT^L)$ error is  $0.1596$ and the relative error is $9.90\%$. Fig.~\ref{fig:DW-S1C6}(b) shows that the Wasserstein distances are at the scale of $10^{-2}$. Fig.~\ref{fig:DW-S1C6}(c) shows that the convergence rate of the $L^2(\rhoT^L)$ error is $0.26$,   
less than the classical minimax rate $\frac{3}{7}\approx 0.42$. Note also that the variance of the $L^2$ error does not decrease as $M$ increases. In comparison with the results for $f_1$ in Fig.\ref{fig:DW-sine}(a), we attribute this relatively low convergence rate and the large variance to the high-frequency component $\cos(6x)$, which is harder to identify from moments than than the low frequency component $\sin(x)$.

\textbf{Arch function:}
 Fig.~\ref{fig:DW-Arch}(a) shows the estimator with degree-0 B-spline basis with dimension $n=45$. The $L^2(\rhoT^L)$ error is  $0.0645$ and the relative error is $14.44\%$. Fig.~\ref{fig:DW-Arch}(b) shows that the Wasserstein distances are small at the scale $10^{-2}$.  Fig.~\ref{fig:DW-Arch}(c) shows that the convergence rate of the $L^2(\rhoT^L)$ error is $ 0.17$,  
 less than the would-be minimax rate $\frac{1}{3}\approx 0.33$.

\textbf{Arch function with observation noise:} To demonstrate that our method can tolerate large observation noise, we present the estimation results from noisy observations of the Arch function, which is the most difficult among the three examples. Suppose that the observation noise $\xi$ in (\ref{eq:obsModel_noisy}) is iid  $\mathcal{N}(0, 0.25)$. Note that the average of $\Ebracket{|Y_t|^2}$  is about $0.2$, so the signal-to-noise ratio is about $\frac{\E[|Y|^2]}{\E[\xi^2]} \approx 0.8$. Thus, we have a relatively large noise. 
However, our method can identify the function using the moments of the noise as discussed in Section \ref{sec:noisyAlg}. 
 Fig.~\ref{fig:DW-Arch-Noise}(a) shows the estimator with degree-1 B-spline basis with dimension $n=24$. The $L^2(\rhoT^L)$ error is  $0.1220$ and the relative error is $27.32\%$. Fig.~\ref{fig:DW-Arch-Noise}(b) shows that the Wasserstein distances are small at the scale $10^{-3}$. The Wasserstein distances is approximated from samples of the noisy data $Y = f_{true}(X) +\xi$ and the noisy prediction $\widehat Y = \widehat f(X) + \xi$.  Fig.~\ref{fig:DW-Arch-Noise}(c) shows that the convergence rate of the $L^2(\rhoT^L)$ error is $ 0.14$. The estimation is not as good as the noise-free case because the noisy observation data lead to milder lower and upper bound restrictions in (\ref{eq:mH_with_bds}).
We emphasize that the \emph{tolerance to noise is exceptional} for such an ill-posed inverse problem, and the key is our use of moments, which averages the noise so that the error occurs at the scale $O(\frac{1}{\sqrt{M}})$.    

 \begin{figure}       
        \centering
  \includegraphics[width=1\textwidth]{./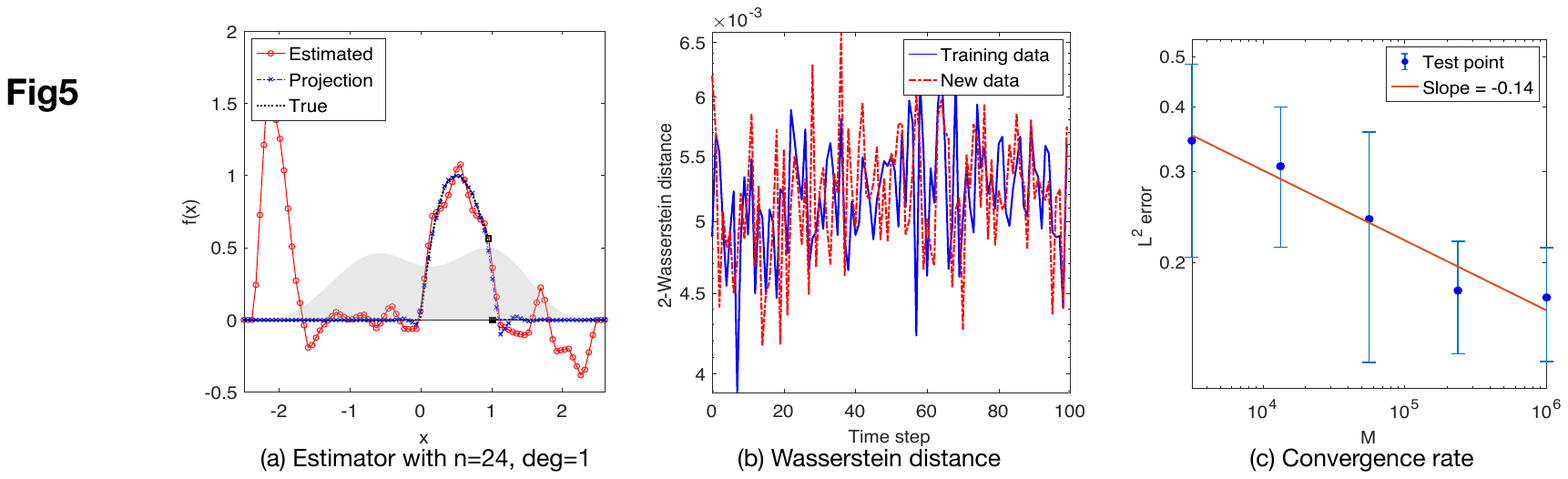}
       \caption{ {\small Learning results of Arch function $f_3$ with model (\ref{eq:DW}) and i.i.d Gaussian observation noise. }}
        \label{fig:DW-Arch-Noise} 
\end{figure}


\bigskip
We have also tested piecewise constant observation functions. Our method has difficulty in identifying such functions, due to two issues: (i) the uniform partition often misses the jump discontinuities (even the projection of $\ftrue$ has a large error); and (ii) the moments we considered depend on the observation function non-locally, thus, they provide limited information to identify the true function from its local perturbations. We leave it for future research to overcome these difficulties by searching the jump discontinuities and by introducing moments detecting local information.

\subsection{Limitations}\label{sec:nonID_num}

We demonstrate by examples the non-identifiability due to symmetry and stationarity. 

\paragraph{Symmetric distribution} Let the state model be the Brownian motion with initial distribution $\text{Unif}(0,1)$. The state process $(X_t)$ has a distribution that is symmetric with respect to the line $x=\frac{1}{2}$, i.e., the processes $(X_t)$ and $(1-X_t)$ have the same distribution. Thus, with the reflection function $R(x) = 1-x$, the processes $f(X_t)$ and  $f\circ R(X_t)$ have the same distribution, and the observation data does not provide information for distinguishing $f$ from $f\circ R$. The loss functional (\ref{eq:errfnl_12corr}) has at least two minima. 

Figure \ref{fig:BM_sin} shows that our algorithm finds the reflection of the true function $\ftrue=\sin(x)$. The hypothesis space $\mathcal{H}$ has B-spline basis functions with degree 2 and dimension 58. Our estimator is close to $\ftrue\circ R (x) = \sin(1-x)$. Its $L^2(\rhoT^L)$ error is $1.1244 $ and its reflection's $L^2(\rhoT^L)$ error is $0.0790 $. Both the estimator and its reflection correctly predict the distribution of the observation process $(Y_t)$.

\begin{figure}[h!]
     \centering
  \includegraphics[width=0.6\textwidth]{./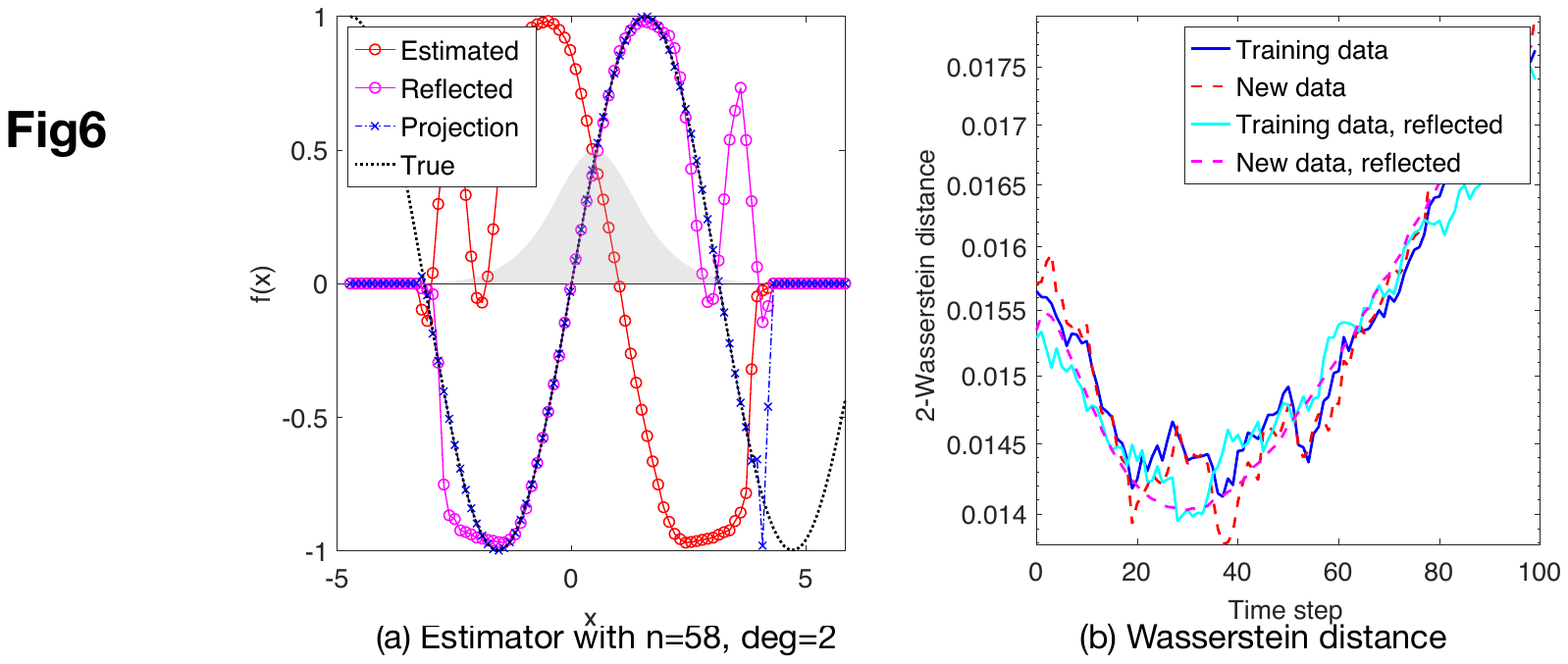}
        \caption{\small  Learning results of $\ftrue(x) = \sin(x)$ with the state model being $X_t= B_t+ X_0$ where $X_{0}\sim \mathrm{Unif}(0,1)$. Due to the symmetry with respect to the line $x=\frac{1}{2}$, the estimator $\widehat{f}(x)$ and its reflection $\widehat{f}(1-x)$ are indistinguishable by the loss functional and they lead to similar prediction of the distribution of $\{Y_{t_l}\}$.  
        }
        \label{fig:BM_sin}
\end{figure}

\bigskip
\paragraph{Stationary process} When the diffusion process $(X_t)$ is stationary, the loss functional $\eqref{eq:errfnl_12corr}$ provides limited information about the observation function. As discussed in Section \ref{sec:nonID}, the matrix $\overline{A}_1$ has rank 1, and $\errfcnl_2=0$ and $\errfcnl_3=0$ lead to only two more constraints. The constraints from the upper and lower bounds in \eqref{eq:mH_with_bds} play a major role in leading to a minimizer at the boundary of the convex set $\mH$. 

Figure \ref{fig:OU_sin} shows the learning results with the stationary Ornstein-Uhlenbeck process $ dX_t = -X_t dt +  dB_t$ and with the observation function $\ftrue(x)=\sin(x)$. The stationary density of $(X_t)$ is $\mathcal{N}(0, \frac{1}{2})$. Due the limited information, the estimator has a large $L^2(\rhoT^L)$ error, which is $0.2656$ and its prediction has large 2-Wasserstein distances oscillating near $0.1290$. 

\begin{figure}[h!]
     \centering
  \includegraphics[width=0.6\textwidth]{./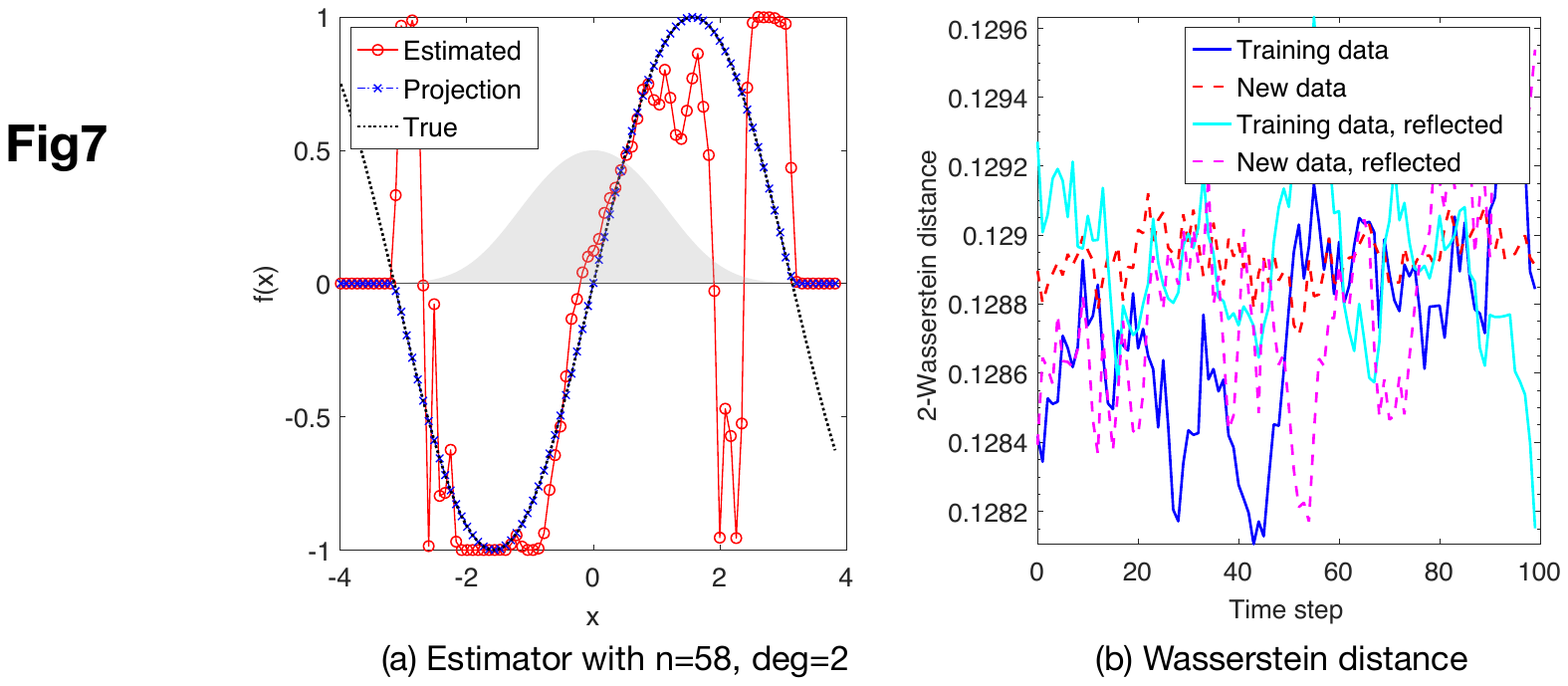}
  \caption{\small Learning results of $\ftrue(x) =\sin(x)$ with stationary Ornstein-Uhlenbeck process. Due to limited information from the moments, the estimator is inaccurate due to its reliance on the upper and lower bound constraints.   
  }
        \label{fig:OU_sin}
        \hfill
\end{figure}

\section{Discussions and conclusion}\label{sec5}
We have proposed a nonparametric learning method to estimate the observation functions in nonlinear state-space models. It matches the generalized moments via constrained regression. The algorithm is suitable for large sets of unlabeled data. Moreover, it can deal with challenging cases when the observation function is non-invertible. We address the fundamental issue of identifiability from first-order moments. We show that the function spaces of identifiability are the closure of RKHS spaces intrinsic to the state model. Numerical examples show that the first two moments and temporal correlations, along with upper and lower bounds, can identify functions ranging from piecewise polynomials to smooth functions and tolerate considerable observation noise. The limitations of this method, such as non-identifiability due to symmetry and stationarity, are also discussed.

This study provides a first step in the unsupervised learning of latent dynamics from abundant unlabeled data. There are several directions calling for further exploration: (i) a mixture of unsupervised and supervised learning that combines unlabeled data with limited labeled data, particularly for high-dimensional functions; (ii) enlarging the function space of learning, either by construction of more first-order generalized moments or by designing experiments to collect more informative data; (iii) joint estimation of the observation function and the state model.

 \appendix
\section{A review of RKHS}\label{sec:appendixA}
\paragraph{Positive definite functions}

We review the definitions and properties of positive definite kernels. The following is a real-variable version of the definition in \cite[p.67]{BCR84}.  

\begin{definition}[Positive definite function]\label{def_spd}
Let $X$ be a nonempty set. A function $G: X\times X\rightarrow \R$ is positive definite if and only if it is symmetric (i.e. $G(x,y)=G(y,x)$) and
$ \sum_{j,k=1}^{n}c_jc_kG(x_j,x_k)\geq 0 $
for all $n\in \mathbb{N}$, $\{x_1,\ldots,x_n\}\subset X$ and $\mathbf{c}=(c_1,\ldots,c_n) \in \R^n$. The function $\phi$ is strictly positive definite if the equality hold only when $\mathbf{c}=\mathbf{0} \in \R^n$. 
\end{definition}

 \begin{theorem}[Properties of positive definite kernels]\label{t52}
 Suppose that $k, k_1, k_2: X \times X \subset\mathbb{R}^d\times\mathbb{R}^d\to \mathbb{R}$ are positive definite kernels. Then
\begin{enumerate} \setlength\itemsep{0mm} 
\item[(a)] $k_1k_2$ is positive definite. (\cite[p.69]{BCR84})
\item[(b)] Inner product $\langle u,v\rangle=\sum_{j=1}^du_jv_j$ is positive definite (\cite[p.73]{BCR84})
\item[(c)] $f(u)f(v)$ is positive definite for any function $f: X \to \mathbb{R}$ (\cite[p.69]{BCR84}).
\end{enumerate}
\end{theorem}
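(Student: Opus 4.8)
The plan is to establish the three claims in increasing order of difficulty, reducing (b) to (c) and isolating (a) as the single substantive step. Throughout I would verify symmetry, which is immediate in each case, and concentrate on the positivity of the associated quadratic forms $\sum_{j,k} c_j c_k G(x_j,x_k)$ from Definition~\ref{def_spd}.

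First I would dispatch (c) by direct computation: for any $n$, points $x_1,\dots,x_n \in X$ and coefficients $\mathbf{c}\in\R^n$, the quadratic form factors as $\sum_{j,k=1}^n c_j c_k f(x_j)f(x_k) = \big(\sum_{j=1}^n c_j f(x_j)\big)^2 \ge 0$, so $f(u)f(v)$ is positive definite. Next I would reduce (b) to (c): writing $f_i(u)=u_i$ for the $i$-th coordinate function, the inner-product kernel splits as $\langle u,v\rangle = \sum_{i=1}^d f_i(u)f_i(v)$, a finite sum of kernels each positive definite by (c). Since the quadratic form $\sum_{j,k} c_j c_k G(x_j,x_k)$ is additive in $G$, a sum of positive definite kernels is positive definite, which yields (b).

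The only nontrivial claim is (a), which is the Schur product theorem, and this is where I expect the main obstacle to lie. Fixing points $x_1,\dots,x_n$, let $A=(k_1(x_j,x_k))_{j,k}$ and $B=(k_2(x_j,x_k))_{j,k}$ be the associated Gram matrices, both positive semidefinite by hypothesis; the Gram matrix of $k_1 k_2$ is then the Hadamard (entrywise) product $A\circ B$, and the claim reduces to showing $A\circ B\succeq 0$. I would prove this by decomposing $A$ through its spectral decomposition, $A=\sum_r \lambda_r w_r w_r^\top$ with $\lambda_r\ge 0$, so that $A\circ B = \sum_r \lambda_r (w_r w_r^\top)\circ B = \sum_r \lambda_r D_r B D_r$ with $D_r=\mathrm{diag}(w_r)$; each congruence $D_r B D_r$ inherits positive semidefiniteness from $B$, so the sum is positive semidefinite.

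An equivalent and perhaps cleaner route would go via feature maps: representing $k_1(x,y)=\langle \Phi_1(x),\Phi_1(y)\rangle_{\mathcal{H}_1}$ and $k_2(x,y)=\langle \Phi_2(x),\Phi_2(y)\rangle_{\mathcal{H}_2}$, the product becomes $k_1 k_2(x,y)=\langle \Phi_1(x)\otimes\Phi_2(x),\,\Phi_1(y)\otimes\Phi_2(y)\rangle_{\mathcal{H}_1\otimes\mathcal{H}_2}$, exhibiting $k_1 k_2$ as an inner-product kernel and hence positive definite. I would favor the Hadamard-product argument as self-contained and elementary: the genuine content of the whole theorem is the single fact that the entrywise product of two positive semidefinite matrices is positive semidefinite, and everything else is bookkeeping around it.
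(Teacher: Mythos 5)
Your proposal is correct, and there is nothing to compare it against in detail: the paper does not prove Theorem~\ref{t52} at all, but simply cites \cite{BCR84} for each item. Your arguments are the standard ones found in that reference --- the square factorization $\sum_{j,k} c_j c_k f(x_j)f(x_k) = \bigl(\sum_j c_j f(x_j)\bigr)^2$ for (c), additivity of the quadratic form over the coordinate kernels for (b), and the Schur product theorem (via the diagonal-congruence decomposition $A\circ B=\sum_r \lambda_r D_r B D_r$, which is essentially the Gram-representation proof in \cite{BCR84}) for (a) --- so your write-up correctly supplies exactly the proof the paper delegates to its citation.
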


\paragraph{RKHS and positive integral operators}

We review the definitions and properties of the Mercer kernel, the RKHS, and the related integral operator,  see e.g., \cite{cucker2007learning} for them on a compact domain   \cite{Sun03Mercer} for them on a non-compact domain.  

Let $(X,d)$ be a metric space and $G:X\times X\to\R$ be continuous and symmetric. We say that $G$ is a Mercer kernel if it is positive definite (as in Definition \ref{def_spd}). The reproducing kernel Hilbert space (RKHS) $\mH_G$ associated with $G$ is defined to be closure of $\mathrm{span}\{G(x,\cdot):x\in X \}$ with the inner product
\begin{equation*}
\langle f , g\rangle_{\mH_G} =  \sum_{i=1,j=1}^{n, m} c_i d_j G(x_i,y_j)
\end{equation*}
for any $f=\sum_{i=1}^n c_i G(x_i,\cdot)$ and $g=\sum_{j=1}^n d_k G(x_j,\cdot)$. It is the unique Hilbert space such that: (1) the linear space $\mathrm{span}\{G(\cdot,y), y\in X\}$ is dense in it; (2) it has the reproducing kernel property in the sense that for all $f\in \mH_G$ and $x\in X$, $f(x) = \langle G(x,\cdot), f\rangle_{G}$ (see \cite[Theorem 2.9]{cucker2007learning}). 

By means of the Mercer Theorem, we can characterize the RKHS $\mH_G$ through the integral operator associated with the kernel. Let $\mu$ be a nondegenerate Borel measure on $(X,d)$ (that is, $\mu(U)>0$ for every open set $U\subset X$). Define the integral operator $L_G$ on  $L^2(X,\mu)$ by 
\[
L_Gf(x)  =\int_X G(x,y)f(y)d\mu(y). 
\] 
The RKHS has the operator characterization (see e.g., \cite[Section 4.4]{cucker2007learning} and \cite{Sun03Mercer}): 
\begin{theorem} \label{thm:RKHS}
Assume that the $G$ is a Mercer kernel and $G\in L^2(X\times X, \mu\otimes \mu)$. Then 
\begin{enumerate} \setlength\itemsep{0mm} 
\item $L_G$ is a compact positive self-adjoint operator. It has countably many positive eigenvalues $\{\lambda_i\}_{i=1}^\infty$ and corresponding orthonormal eigenfunctions $\{\phi_i\}_{i=1}^\infty$. Note that when zero is an eigenvalue of $L_G$, the linear space  $H=\mathrm{span}\{\phi_i\}_{i=1}^\infty$ is a proper subspace of $L^2(\mu)$. 
\item $\{\sqrt{\lambda_i} \phi_i \}_{i=1}^\infty$ is an orthonormal basis of the RKHS $\mH_G$. 
\item The RKHS is the image of the square root of the integral operator, i.e., $\mH_G=L_G^{1/2} L^2(X,\mu)$. 
\end{enumerate}
\end{theorem}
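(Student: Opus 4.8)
The plan is to follow the classical derivation via the spectral theorem for compact self-adjoint operators together with Mercer's expansion, adapting \cite[Section 4.4]{cucker2007learning} and \cite{Sun03Mercer} to the present setting where $X$ is not assumed compact but $G\in L^2(X\times X,\mu\otimes\mu)$. First I would establish the three operator properties asserted in part~1. Since $G\in L^2(X\times X,\mu\otimes\mu)$, the operator $L_G$ is Hilbert--Schmidt and hence compact; this is the only role of the square-integrability hypothesis. Self-adjointness follows from the symmetry $G(x,y)=G(y,x)$ and Fubini's theorem, giving $\innerp{L_G f,g}=\innerp{f,L_G g}$. Positivity of the quadratic form $\innerp{L_G f,f}=\int\!\int G(x,y)f(x)f(y)\,d\mu(x)d\mu(y)\geq 0$ I would deduce from the positive-definiteness of $G$ (Definition~\ref{def_spd}): approximating $f$ by simple functions $\sum_k c_k\mathbf{1}_{A_k}$ and using the continuity of $G$, the integral is a limit of finite quadratic forms $\sum_{j,k}c_jc_k G(x_j,x_k)\mu(A_j)\mu(A_k)$, each nonnegative by Definition~\ref{def_spd}.

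Next I would invoke the spectral theorem for compact self-adjoint operators to obtain a countable family of eigenvalues accumulating only at $0$, with orthonormal eigenfunctions that, together with $\ker L_G$, span $L^2(X,\mu)$. Positivity forces every eigenvalue to be nonnegative, and discarding the zero eigenvalue leaves exactly the positive pairs $\{\lambda_i,\phi_i\}_{i=1}^\infty$. Since $\overline{\mathrm{span}\{\phi_i\}}=(\ker L_G)^\perp$, the space $H=\mathrm{span}\{\phi_i\}$ is a proper subspace of $L^2(X,\mu)$ precisely when $\ker L_G\neq\{0\}$, i.e. when $0$ is an eigenvalue; this gives the parenthetical statement in part~1.

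The main work lies in part~2, for which I would establish Mercer's expansion $G(x,y)=\sum_i \lambda_i\phi_i(x)\phi_i(y)$, convergent in $L^2(X\times X,\mu\otimes\mu)$. The delicate point — and the step I expect to be the principal obstacle — is the mode of convergence on a possibly non-compact $X$, for which I would follow \cite{Sun03Mercer} rather than the compact-domain argument of \cite{cucker2007learning}. Granting the expansion, each $\phi_i=\lambda_i^{-1}L_G\phi_i=\lambda_i^{-1}\int\phi_i(y)G(y,\cdot)\,d\mu(y)$ lies in $\mH_G$, and the reproducing property yields $\innerp{\phi_i,\phi_j}_{\mH_G}=\lambda_i^{-1}\innerp{\phi_i,\phi_j}_{L^2(\mu)}=\lambda_i^{-1}\delta_{ij}$, so $\{\sqrt{\lambda_i}\phi_i\}$ is orthonormal in $\mH_G$. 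Completeness follows because $\mathrm{span}\{G(x,\cdot):x\in X\}$ is dense in $\mH_G$ by definition and each $G(x,\cdot)=\sum_i\lambda_i\phi_i(x)\phi_i$ lies in $\overline{\mathrm{span}\{\phi_i\}}$ by the expansion.

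Finally, for part~3 I would define $L_G^{1/2}$ by the functional calculus, $L_G^{1/2}\phi_i=\sqrt{\lambda_i}\phi_i$ and $L_G^{1/2}=0$ on $\ker L_G$. Writing $f=\sum_i a_i\phi_i\in L^2(X,\mu)$, its image $L_G^{1/2}f=\sum_i\sqrt{\lambda_i}a_i\phi_i$ has, by part~2, RKHS norm $\sum_i a_i^2=\norm{f}_{L^2(\mu)}^2<\infty$, so $L_G^{1/2}(L^2(X,\mu))\subset\mH_G$; conversely any $g=\sum_i c_i\sqrt{\lambda_i}\phi_i\in\mH_G$ with $\sum_i c_i^2<\infty$ is the image of $\sum_i c_i\phi_i\in L^2(X,\mu)$. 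Combining the two inclusions gives $\mH_G=L_G^{1/2}(L^2(X,\mu))$. Steps~1--2 are routine operator theory, and once the Mercer expansion is in hand the identification of the orthonormal basis and of the square-root image in Step~3 is essentially bookkeeping with the spectral decomposition.
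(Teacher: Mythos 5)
This theorem is stated in Appendix~A as background, and the paper supplies no proof of its own, deferring to \cite[Section 4.4]{cucker2007learning} and \cite{Sun03Mercer}; your outline reconstructs exactly that standard argument (Hilbert--Schmidt compactness from $G\in L^2(\mu\otimes\mu)$, positivity of the quadratic form via simple-function approximation and continuity, the spectral theorem, Mercer's expansion, and the spectral bookkeeping identifying $\{\sqrt{\lambda_i}\phi_i\}$ and $L_G^{1/2}(L^2(X,\mu))$), and it is correct. The one point you rightly single out as delicate --- the mode of convergence of the Mercer expansion on a non-compact $X$, which is what underwrites both $L_G(L^2)\subset\mH_G$ and the completeness of $\{\sqrt{\lambda_i}\phi_i\}$ in $\mH_G$ --- is precisely what \cite{Sun03Mercer} provides, so deferring to it is faithful to the paper's treatment.
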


\section{Algorithm details}
\subsection{B-spline basis and dimension of the hypothesis space}\label{sec:append_HypoSpace}
The choice of hypothesis space is important for the nonparametric regression. We choose the basis functions to be the B-splines. To select an optimal dimension of the hypothesis space, we introduce a new algorithm to estimate the range for the dimension and then we select the optimal dimension that minimizes the 2-Wasserstein distance between the measures of data and prediction. 

\paragraph{B-Spline basis functions} We briefly review the definition of B-spline basis functions and we refer to \cite[Chapter 2]{piegl1997_NURBSBook} and \cite{lyche2018_FoundationsSpline} for details. 
Given a nondecreasing sequence of real numbers, called knots, $(r_0,r_1,\ldots, r_m)$, the B-spline basis functions of degree $p$, denoted by $\{N_{i,p} \}_{i=0}^{m-p-1}$, are defined recursively as
\begin{equation*}
\begin{aligned}
  &N_{i,0}(r) = 
  \left\{
    \begin{array}{lr}
      1,\ &r_i \leq r < r_{i+1}\\
      0,\ &\text{otherwise}
    \end{array}
  \right.,\qquad
  N_{i,p}(r) = \frac{r - r_i}{r_{i + p} - r_i} N_{i, p-1}(r) + \frac{r_{i+p+1} - r}{r_{i + p + 1} - r_{i + 1}}N_{i + 1, p- 1}(r).
\end{aligned}
\end{equation*}
Each function $N_{i,p}$ is a nonnegative local polynomial of degree $p$, supported on $[r_i,r_{i+p+1}]$. At a knot with multiplicity $k$, it is $p-k$ times continuously differentiable. Hence, the differentiability increases with the degree but decreases when the knot multiplicity increases. The basis satisfies a partition unity property: for each $r\in [r_i,r_{i+1}]$, $\sum_{j} N_{j,p}(r)  = \sum_{j=i-p}^i N_{j,p}(r) =1$. 

We set the knots of the spline functions to be a uniform partition of $[R_{min}, R_{max}]$ (the support of the measure $\rhoT^L$ in \eqref{eq:rho})
$R_{min} =  r_{0} \leq r_1 \leq \cdots \leq r_{m} = R_{min}$.
For any choice of degree $p$, we set the basis functions of the hypothesis space $\mH$, contained in a subspace with dimension $n=m-p$, to be 
$$\phi_i(r) = N_{i, p}(r), \ i = 0,\dots, m-p-1. $$ 
Thus, the basis functions $\{\phi_i\}$ are piecewise degree-$p$ polynomials with knots adaptive to $\rhoT^L$. 


\paragraph{Dimension of the hypothesis space.} The choice of dimension $n$ of $\mH$ is important to avoid under- and over-fitting: we choose it by minimizing the 2-Wasserstein distance between the empirical distributions of observed process $(Y_t)$ and that predicted by our estimated observation function.  
To reduce the computational burden, we proceed in 2 steps: first we determine a rough range for $n$, and then within this range we select the dimension with the minimal Wasserstein distance.

Step 1: we introduce an algorithm, called \emph{Cross-validating Estimation of Dimension Range (CEDR)}, to estimate the range $[1,N]$ for the dimension of the hypothesis space, based on the quadratic loss functional $\errfcnl_1$. Its main idea is to restrict $N$ to avoid overly amplifying the estimator's sampling error, which is estimated by splitting the data into two sets. 
It incorporates the function space of identifiability in Section \ref{sec:ID-RKHS} into the SVD analysis \cite{fierro1997_RegularizationTruncated,hansen_LcurveIts_a} of the normal matrix and vector from $\errfcnl_1$. 

The CEDR algorithm estimates the sampling error in the minimizer of loss functional $\errfcnl_1$ through SVD analysis in three steps. First, we compute the normal matrix $\overline{A}_1$ and vector $\overline{b}_1$  in \eqref{eq:Ab1} by Monte Carlo; to estimate the sampling error in $\overline{b}_1$, we compute two copies, $b$ and $b'$, of $\overline{b}_1$ from two halves of the data: 
\begin{equation}\label{eq:bb'}
\begin{aligned}
b(i) =  \sumLavg  \Ebracket{\phi_i(X_{t_l}) } \frac{2}{M}\sum_{m = 1}^{\lfloor\frac{M}{2}\rfloor}{Y_{t_l}^{(m)}},\quad 
 b'(i)=  \sumLavg  \Ebracket{\phi_i(X_{t_l}) } \frac{2}{M}\sum_{m = \lfloor\frac{M}{2}\rfloor+1}^{M}{Y_{t_l}^{(m)}}. 
\end{aligned}
\end{equation}
Second, we implement an eigen-decomposition to find an orthonormal basis of $L^2(\rhoT^L)$, the default function space of learning. We view the matrix $\overline{A}_1$ as a representation of the integral operator $L_{K_1}$ in Lemma \ref{lemma:K1} on $\mH=\mathrm{span}\{\phi_i\}_{i=1}^n$. The eigen-decomposition requires the generalized eigenvalue problem 
\begin{equation}\label{eq:gEigenP}
 \overline{A}_1 u =\lambda  B u, \quad \text{ where }  B = ( \innerp{\phi_i,\phi_j}_{L^2(\rhoT^L)}) 
 \end{equation}
 (see \cite[Theorem 5.1]{LangLu21}). Denote the eigen-pairs by $\{\sigma_i, u_i\}$, where the eigenvalues $\{\sigma_i\}$ are non-increasingly ordered and the eigenvectors are subject to normalization $u_i^\top B u_j= \delta_{i,j}$. 
Thus, we have $\overline{A}_1 = \sum_{i=1}^n \sigma_i u_i u_i^\top $ (assuming that all $\sigma_i$'s are positive; otherwise, we drop those zero eigenvalues). The least-squares estimators from $b$ and $b'$ are $c = \sum_{i=1}^n \frac{u_i^\top b}{\sigma_i} u_i$ and $c' = \sum_{i=1}^n \frac{u_i^\top b'}{\sigma_i} u_i$, respectively. Third, the difference between their function estimators represents the sampling error
 (with $\Delta c = c-c'$)
\begin{equation}\label{eq:err_gk}
\begin{aligned}
g(n):= & \| \widehat f - \widehat f'  \|_{L^2(\rhoT^L)}^2 = \| \sum_{k=1}^n \Delta c_k \phi_k \|_{L^2(\rhoT^L)}^2 = \sum_{i,j=1}^n \Delta c_i \innerp{\phi_i,\phi_j}_{L^2(\rhoT^L)} \Delta c_j= \Delta c^\top B \Delta c \\ 
& =  \sum_{i,j=1}^n \frac{u_i^\top (b-b')}{\sigma_i} u_i^\top B u_j \frac{u_j^\top (b-b')}{\sigma_j} = \sum_{i=1}^n r_i^2, 
\end{aligned}
\end{equation}
where $r_i= \frac{|u_i^\top (b-b')|}{\sigma_i}$. The ratio $r_i$ is in the same spirit as the \emph{Picard projection ratio} $\frac{|u_i^\top b|}{\sigma_i}$ in  \cite{hansen_LcurveIts_a}, which is used to detect overfitting. 
Note that the eigenvalues $\sigma_i$ will vanish as $n$ increases because the operator $L_{K_1}$ is compact.  Clearly, the sampling error $g(n)$ should be less than $\|\ftrue\|_{L^2(\rhoT^L)}^2$, which is the average of the second moments. 
Thus, we set $N$ to be
\begin{equation}\label{eq:N2}
\begin{aligned}
N &= \max\{k\geq 1: g(k) \leq \tau  \}, \text{ where } \tau=  \frac{1}{LM}\sum_{l=1, m=1}^{L,M} |Y_{t_l}^{(m)} |^2.
\end{aligned}
\end{equation}
 We note that this threshold is relatively large, neglecting the rich information in $g$, a subject worthy of further investigation.

Algorithm \ref{alg:DimensionRange} summarizes the above procedure. 
\begin{algorithm}[H]
\caption{ {\small Cross-validating Estimation of Dimension Range (CEDR) for hypothesis space 
}}\label{alg:DimensionRange}
{\small
\begin{algorithmic}[1]
\Require{The state model and data $\{Y_{t_0:t_L}^{(m)} \}_{m=1}^M$. } 
\Ensure{A range $[1,N]$ for the dimension of the hypothesis space for further selection.}
\State  Estimate the empirical density $\rhoT$ in \eqref{eq:rho}  and find its support $[R_{min}, R_{max}]$. 
\State Set $n=1$ and $g(n)=0$. Estimate the threshold $\tau$ in \eqref{eq:N2}.  
\While{$g(n)\leq \tau$}
\State Set $n\leftarrow n+1$. Update the basis functions, Fourier or B-spline, as in Section \ref{sec:basisFn_dim}. 
\State Compute normal matrix $\overline{A}_1$ in \eqref{eq:Ab1} by Monte Carlo. Also, compute  $b$ and $b'$ in \eqref{eq:bb'}. 
\State Eigen-decomposition of $\overline{A}_1$ as in \eqref{eq:gEigenP}; return $\overline{A}_1 =\sum_{i=1}^n u_i \sigma_i u_i^T$ with $u_i^\top B u_j= \delta_{i,j}$.    
\State Compute the Picard projection ratios: $r_i = \frac{|u_i^\top (b-b')|}{\sigma_i}$ for $i=1,\ldots,n$ and $g(n)=  \sum_{i=1}^n r_i^2$.
\EndWhile
\State Return $N=n$. 
\end{algorithmic}
}\end{algorithm}

Step 2:  We select the dimension $n$ and degree for B-spline basis functions to be the one with the smallest 2-Wasserstein distance between the distribution of the data and that of the predictions. 
More precisely, let $\mu^f_{t_l}$ and $\mu^{\widehat f}_{t_l}$ denote the distributions of $Y_{t_l} = f(X_{t_l})$ and $\widehat{f}(X_{t_l})$, respectively. Let $F_{t_l}$ and $\widehat F_{t_l}$ denote their cumulative distribution functions (CDF), with $F_{t_l}^{-1}$ and $\widehat F_{t_l}^{-1}$ being their inversion. We compute $F_{t_l}$  from the data and $\widehat F_{t_l}$ from independent simulation. We approximate their inversions by quantile, and compute the 2-Wasserstein distance  
\begin{align}\label{eq:W2}
\left(\sumLavg W_2(\mu^f_{t_l}, \mu^{\widehat f}_{t_l})^2  \right)^{1/2} \quad \text{ with } W_2(\mu^f_{t_l}, \mu^{\widehat f}_{t_l}) = \left( \int_0^1 (F^{-1}_{t_l}(r) - \widehat F^{-1}_{t_l}(r)  )^2 dr\right)^\frac{1}{2},
\end{align}
This method of computing the Wasserstein distance is based on an observation in \cite{carrillo2004wasserstein}, and it has been used in \cite{panaretos2019statistical,Kolbe2020WassersteinDistance}. 
Recall that the 2-Wasserstein distance $W_2(\mu,\nu)$ of two probability density functions $\mu$ and $\nu$ over $\Omega$ with finite second order moments is given by 
$ W_2(\mu,\nu) = \left( \inf\limits_{\gamma\in\Gamma(\mu,\nu)} \int_{\Omega\times\Omega} |x-y|^2d\gamma(x,y)  \right)^\frac{1}{2}$,
where $\Gamma(\mu,\nu)$ denotes the set of all measures on $\Omega \times \Omega $ with $\mu$ and $\nu$ as marginals. Let $F$ and $G$ be the CDFs of $\mu$ and $\nu$ respectively, and let $F^{-1}$ and $G^{-1}$ be their quantile functions. Then the $L^2$ distance of the quantile functions $ d_2(\mu,\nu) = \left(\int_0^1 |F^{-1}(r)-G^{-1}(r)dr |^2  \right)^{\frac{1}{2}}$ is equal to the 2-Wasserstein distance $W_2(\mu,\nu)$.

\subsection{Optimization with multiple initial conditions}\label{sec:multiIC}
 With the convex hypothesis space in \eqref{eq:mH_with_bds}, the minimization in  \eqref{eq:errfcnl-M} is a constrained optimization problem and it may have multiple local minima. Note that the loss functional $\errfcnl^M(c)$ in \eqref{eq:errfcnl-M} consists of a quadratic term and two quartic terms. The quadratic term, which represents $\errfcnl_1^M$ in \eqref{eq:errfcnl1}, has a Hessian matrix $\overline{A}_{1}$ which is often not full rank because it is the average of rank-one matrices  by its definition  \eqref{eq:Ab1}. Thus, the quadratic term has a valley of minima in the kernel of $\overline{A}_{1}$. The two quartic terms have valleys of minima at the intersections of the ellipse-shaped manifolds $\{c\in \R^n:   c^\top A_{k,l} c = b_{k,l}^M \}_{l=1}^L$ for $k=2,3$.  Also, symmetry in the distribution of the state process will also lead to multiple minima (see Section \ref{sec:nonID} for more discussions). 

  To reduce the possibility of obtaining a local minimum, we search for a minimizer from multiple initial conditions. We consider the following initial conditions: (1) the  least squares estimator for the quadratic term; (2) the minimizer of the quadratic term in the hypothesis space, which is solved by least squares with linear constraints using @MATLAB function \textsf{lsqlin}, starting from the LSE estimator; (3) the minimizers of the quartic terms over the hypothesis space, which is found by constrained optimization through @MATLAB \textsf{fmincon} with the interior-point search. Then, among the minimizers from these initial conditions, we take the one leading to the smallest 2-Wasserstein distance. 

\section{Selection of dimension and degree of the B-spline basis}\label{sec:dimSelection}
We demonstrate the selection of the dimension and degree of the B-spline basis functions of the hypothesis space. 
As described in Section \ref{sec:basisFn_dim}, we select the dimension and degree in two steps: we first select a rough range for the dimension by the Cross-validating Estimation of Dimension Range (CEDR) algorithm; then we pick the dimension and degree to be the ones with minimal 2-Wasserstein distance between the true and estimated distribution of the observation processes.

The CEDR algorithm helps to reduce the computational cost by estimating the dimension range for the hypothesis space. It is based on an SVD analysis of the normal matrix $\overline{A}_1$ and vector $\overline{b}_1$ from the quadratic loss functional $\errfcnl_1$. The key idea is to control the sampling error's effect on the estimator in the metric of the function space of learning. The sampling error is estimated by computing two copies of the normal vector through splitting the data into two halves. The function space of learning plays an important role here: it directs us to use a generalized eigenvalue problem for the SVD analysis. This is different from the classical SVD analysis in \cite{hansen_LcurveIts_a}, where the information of the function space is neglected.

\begin{figure}[h]
\centering
              \includegraphics[width=0.6\textwidth]{./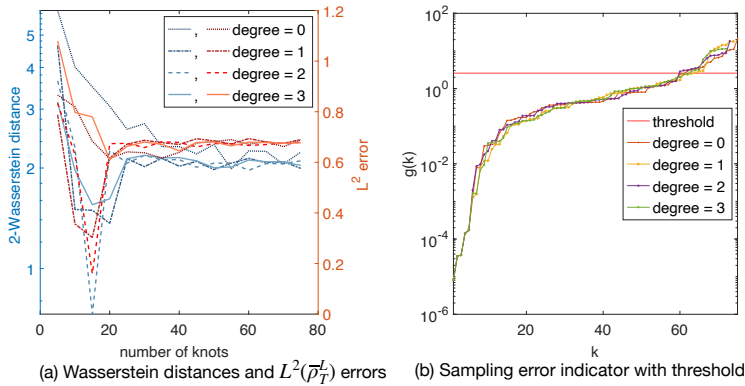}
    \caption{\small The selection of the dimension and the degree of B-spline basis functions in the case of Sine-Cosine function. In (a), the 2-Wasserstein distance reaches minimum among all cases when the degree is 2 and the knot number is 15,  at the same time as the $L^2(\rhoT^L)$ error reaches the minimum. Figure (b) shows the cross-validating error indicator $g$ (defined in \eqref{eq:err_gk}) for selecting the dimension range $N$, suggesting an upper bound $N=60$ with the threshold.} 
     \label{fig:dimSelection}
\end{figure}

Figure \ref{fig:dimSelection} shows the dimension selection by 2-Wasserstein distances and by the CEDR algorithm for the example of sine-cosine function.  To confirm the effectiveness of our CEDR algorithm, we compute the 2-Wasserstein distances for all dimensions in (a), side-by-side with the CEDR sampling error indicator $g$ in (b) with relatively large dimensions $\{n=75-deg|$ for $deg\in \{0,1,2,3\}$. First, the left figure suggests that the optimal dimension and degree are $n=13$ and $deg=2$, where the 2-Wasserstein distance reaches minimum among all cases, and at the same time as the $L^2(\rhoT^L)$ error. For the other degrees, the minimum 2-Wasserstein distances are either reached before of after the $L^2(\rhoT^L)$ error. Thus, the 2-Wasserstein distance correctly selects the optimal dimension and degree for the hypothesis space. Second, (a) shows that the CEDR algorithm can effectively select the dimension range.  With the threshold in \eqref{eq:N2} being $\tau = 1.60$, which is relatively large (representing a tolerance of 100\% relative error), the dimension upper bounds are around $N=60$ for all degrees, and the ranges encloses the optimal dimensions selected by the 2-Wasserstein distance in (b).

Here we used a relatively large threshold for a rough estimation of the range of dimension. Clearly, our cross-validating error indicator $g(k)$ in \eqref{eq:err_gk} provides rich information about the increase of sampling error as the dimension increases. 
 A future direction is to extract the information, along with the decay of the integral operator, to find the trade-off between sampling error and approximation error.

\ifjournal \section*{Acknowledgments} \fi
\ifarXiv \paragraph{Acknowledgments} \fi
 MM, YGK and FL are partially supported by DE-SC0021361 and FA9550-21-1-0317. FL is partially funded by the NSF Award DMS-1913243.  
\ifjournal 
      \bibliographystyle{siamplain}
\fi
\ifarXiv
    \bibliographystyle{myplain}
\fi
{\small
\bibliography{ref_FeiLU2021_6,ref_myLib,ref_Bspline,latentDS,LearningTheory,ref_regularization}

\begin{thebibliography}{10}

\bibitem{BCR84}
C. Berg, J.~P.~R. Christensen, and P. Ressel.
\newblock {\em Harmonic analysis on semigroups: theory of positive definite and
  related functions}, volume 100.
\newblock New York: Springer, 1984.

\bibitem{billings_NonlinearSystem2013}
S.~A. Billings.
\newblock {\em Nonlinear {{System Identification}}}.
\newblock {John Wiley \& Sons, Ltd}, {Chichester, UK}, 2013.

\bibitem{BD91}
P. Brockwell and R. Davis.
\newblock {\em Time series: theory and methods}.
\newblock Springer, New York, 2nd edition, 1991.

\bibitem{cappe_InferenceHidden2005}
O. Capp{\'e}, E. Moulines, and T. Ryd{\'e}n.
\newblock {\em Inference in Hidden {{Markov}} Models}.
\newblock Springer Series in Statistics. {Springer}, {New York ; London}, 2005.

\bibitem{carrillo2004wasserstein}
J.~A. Carrillo and G. Toscani.
\newblock Wasserstein metric and large--time asymptotics of nonlinear diffusion
  equations.
\newblock In {\em New Trends in Mathematical Physics: In Honour of the
  Salvatore Rionero 70th Birthday}, pages 234--244. World Scientific, 2004.

\bibitem{DiffusionPNAS}
R.~R. Coifman, S. Lafon, A.~B. Lee, M. Maggioni, B. Nadler, F. Warner, and
  S.~W. Zucker.
\newblock Geometric diffusions as a tool for harmonic analysis and structure
  definition of data: Diffusion maps.
\newblock {\em Proceedings of the National Academy of Sciences of the United
  States of America}, 102(21):7426--7431, 2005.

\bibitem{cucker2007learning}
F. Cucker and D.~X. Zhou.
\newblock {\em Learning theory: an approximation theory viewpoint}, volume~24.
\newblock Cambridge University Press, 2007.

\bibitem{FY03}
J. Fan and Q. Yao.
\newblock {\em Nonlinear Time Series: Nonparametric and Parametric Methods}.
\newblock Springer, New York, NY, 2003.

\bibitem{fierro1997_RegularizationTruncated}
R.~D. Fierro, G.~H. Golub, P.~C. Hansen, and D.~P. O'Leary.
\newblock Regularization by {{Truncated Total Least Squares}}.
\newblock {\em SIAM J. Sci. Comput.}, 18(4):1223--1241, 1997.

\bibitem{gelada2019_DeepMDPLearning}
C. Gelada, S. Kumar, J. Buckman, O. Nachum, and M.~G. Bellemare.
\newblock {{DeepMDP}}: {{Learning Continuous Latent Space Models}} for
  {{Representation Learning}}.
\newblock {\em ArXiv190602736 Cs Stat}, 2019.

\bibitem{ghosh2014_BayesianInference}
A. Ghosh, S. Mukhopadhyay, S. Roy, and S. Bhattacharya.
\newblock Bayesian inference in nonparametric dynamic state-space models.
\newblock {\em Statistical Methodology}, 21:35--48, 2014.

\bibitem{guglielmi2015_ClassificationHidden}
N. Guglielmi and E. Hairer.
\newblock Classification of {{Hidden Dynamics}} in {{Discontinuous Dynamical
  Systems}}.
\newblock {\em SIAM J. Appl. Dyn. Syst.}, 14(3):1454--1477, 2015.

\bibitem{Gyorfi06}
L. Gy{\"o}rfi, M. Kohler, A. Krzyzak, and H. Walk.
\newblock {\em A distribution-free theory of nonparametric regression}.
\newblock Springer Science \& Business Media, 2006.

\bibitem{hafner2019_LearningLatent}
D. Hafner, T. Lillicrap, I. Fischer, R. Villegas, D. Ha, H. Lee, and J.
  Davidson.
\newblock Learning {{Latent Dynamics}} for {{Planning}} from {{Pixels}}.
\newblock {\em ArXiv181104551 Cs Stat}, 2019.

\bibitem{hansen_LcurveIts_a}
P.~C. Hansen.
\newblock The {L}-curve and its use in the numerical treatment of inverse
  problems.
\newblock In {\em in Computational Inverse Problems in Electrocardiology, ed.
  P. Johnston, Advances in Computational Bioengineering}, pages 119--142. WIT
  Press, 2000.

\bibitem{jeffrey2018_HiddenDynamics}
M.~R. Jeffrey.
\newblock {\em Hidden {{Dynamics}}: {{The Mathematics}} of {{Switches}},
  {{Decisions}} and {{Other Discontinuous Behaviour}}}.
\newblock {Springer International Publishing}, {Cham}, 2018.

\bibitem{kaiser2020_ModelBasedReinforcement}
L. Kaiser, M. Babaeizadeh, P. Milos, B. Osinski, R.~H. Campbell, K. Czechowski,
  D. Erhan, C. Finn, P. Kozakowski, S. Levine, A. Mohiuddin, R. Sepassi, G.
  Tucker, and H. Michalewski.
\newblock Model-{{Based Reinforcement Learning}} for {{Atari}}.
\newblock {\em ArXiv190300374 Cs Stat}, 2020.

\bibitem{kantas_OverviewSequential2009}
N. Kantas, A. Doucet, S. Singh, and J. Maciejowski.
\newblock An {{Overview}} of {{Sequential Monte Carlo Methods}} for {{Parameter
  Estimation}} in {{General State}}-{{Space Models}}.
\newblock {\em IFAC Proc. Vol.}, 42(10):774--785, 2009.

\bibitem{Kolbe2020WassersteinDistance}
N. Kolbe.
\newblock Wasserstein distance.
\newblock \url{https://github.com/nklb/wasserstein-distance}, 2020.

\bibitem{LangLu21}
Q. Lang and F. Lu.
\newblock Identifiability of interaction kernels in mean-field equations of
  interacting particles.
\newblock {\em arXiv preprint arXiv:2106.05565}, 2021.

\bibitem{LSZ15}
K. Law, A. Stuart, and K. Zygalakis.
\newblock {\em Data Assimilation: A Mathematical Introduction}.
\newblock Springer, 2015.

\bibitem{LLMTZ21}
Z. Li, F. Lu, M. Maggioni, S. Tang, and C. Zhang.
\newblock On the identifiability of interaction functions in systems of
  interacting particles.
\newblock {\em Stochastic Processes and their Applications}, 132:135--163,
  2021.

\bibitem{ljung1998system}
L. Ljung.
\newblock System identification.
\newblock In {\em Signal analysis and prediction}, pages 163--173. Springer,
  1998.

\bibitem{LLA22}
F. Lu, Q. Lang, and Q. An.
\newblock Data adaptive {RKHS Tikhonov} regularization for learning kernels in
  operators.
\newblock {\em arXiv preprint arXiv:2203.03791}, 2022.

\bibitem{LZTM19pnas}
F. Lu, M. Zhong, S. Tang, and M. Maggioni.
\newblock Nonparametric inference of interaction laws in systems of agents from
  trajectory data.
\newblock {\em Proc. Natl. Acad. Sci. USA}, 116(29):14424--14433, 2019.

\bibitem{lyche2018_FoundationsSpline}
T. Lyche, C. Manni, and H. Speleers.
\newblock {\em Foundations of {{Spline Theory}}: {{B}}-{{Splines}}, {{Spline
  Approximation}}, and {{Hierarchical Refinement}}}, volume 2219, pages 1--76.
\newblock {Springer International Publishing}, {Cham}, 2018.

\bibitem{moosmuller2019_GeometricApproach}
C. Moosm{\"u}ller, F. Dietrich, and I.~G. Kevrekidis.
\newblock A geometric approach to the transport of discontinuous densities.
\newblock {\em ArXiv190708260 Phys. Stat}, 2019.

\bibitem{panaretos2019statistical}
V.~M. Panaretos and Y. Zemel.
\newblock Statistical aspects of wasserstein distances.
\newblock {\em Annual review of statistics and its application}, 6:405--431,
  2019.

\bibitem{piegl1997_NURBSBook}
L. Piegl and W. Tiller.
\newblock {\em The {{NURBS Book}}}.
\newblock Monographs in {{Visual Communication}}. {Springer Berlin Heidelberg},
  {Berlin, Heidelberg}, 1997.

\bibitem{pokern_ParameterEstimation2009}
Y. Pokern, A.~M. Stuart, and P. Wiberg.
\newblock Parameter estimation for partially observed hypoelliptic diffusions.
\newblock {\em J. R. Stat. Soc. Ser. B Stat. Methodol.}, 71(1):49--73, 2009.

\bibitem{prakasarao_StatisticalInference1988}
B.~L.~S. Prakasa~Rao.
\newblock Statistical inference from sampled data for stochastic processes.
\newblock In N.~U. Prabhu, editor, {\em Contemporary {{Mathematics}}},
  volume~80, pages 249--284. {American Mathematical Society}, {Providence,
  Rhode Island}, 1988.

\bibitem{rahimi07unsupervised}
A. Rahimi and B. Recht.
\newblock Unsupervised regression with applications to nonlinear system
  identification.
\newblock In {\em Advances in Neural Information Processing Systems}, pages
  1113--1120, 2007.

\bibitem{sorensen_EstimatingFunctions2012}
M. S{\o}rensen.
\newblock Estimating functions for diffusion-type processes.
\newblock In {\em Statistical Methods for Stochastic Differential Equations},
  volume 124, pages 1--107. {Monogr. Statist. Appl. Probab}, 2012.

\bibitem{Sun03Mercer}
H. Sun.
\newblock Mercer theorem for {RKHS} on noncompact sets.
\newblock {\em Journal of Complexity}, 21(3):337 -- 349, 2005.

\bibitem{svensson2017_FlexibleState}
A. Svensson and T.~B. Sch{\"o}n.
\newblock A flexible state\textendash space model for learning nonlinear
  dynamical systems.
\newblock {\em Automatica}, 80:189--199, 2017.

\bibitem{tobar2015_UnsupervisedStateSpace}
F. Tobar, P.~M. Djuric, and D.~P. Mandic.
\newblock Unsupervised {{State}}-{{Space Modeling Using Reproducing Kernels}}.
\newblock {\em IEEE Trans. Signal Process.}, 63(19):5210--5221, 2015.

\end{thebibliography}
}

\end{document}